\setlist{topsep=1pt,parsep=1pt,partopsep=1pt,itemsep=1pt}
\newcommand{\algorithmstyle}[1]{\renewcommand{\algocf@style}{#1}}
\title{\mbox{Classifier-Free Guidance is a Predictor-Corrector}}
\author{
{\bf Arwen Bradley$^*$}\\
Apple
\and
{\bf Preetum Nakkiran$^*$}\\
Apple 
}
\date{}
\begin{document}

\maketitle
\def\thefootnote{*}\footnotetext{Equal contribution}\def\thefootnote{\arabic{footnote}}

\begin{abstract}
We investigate the theoretical foundations of
classifier-free guidance (CFG).
CFG is the dominant method of conditional sampling for text-to-image diffusion models, yet
unlike other aspects of diffusion, it remains on shaky theoretical footing. In this paper, we first disprove common misconceptions,
by showing that CFG interacts differently with
DDPM~\citep{ho2020denoising} and DDIM~\citep{song2021denoising},
and neither sampler with CFG generates the gamma-powered distribution $p(x|c)^\gamma p(x)^{1-\gamma}$. 
Then, we clarify the behavior of CFG by showing that it is a kind of
predictor-corrector method \citep{song2020score} that alternates between denoising and sharpening, which we call predictor-corrector guidance (PCG).
We prove that in the SDE limit,
CFG is actually equivalent to
combining a DDIM predictor for the conditional distribution
together with a Langevin dynamics corrector
for a gamma-powered distribution
(with a carefully chosen gamma).
Our work thus provides a lens to theoretically understand CFG
by embedding it in a broader design space
of principled sampling methods.

\end{abstract}

\section{Introduction}

Classifier-free-guidance (CFG) has become an essential part of modern
diffusion models, especially in text-to-image applications \citep{dieleman2022guidance, rombach2022high,nichol2021glide,podell2023sdxl}.
CFG is intended to improve conditional sampling, e.g. generating images conditioned on a given class label or text prompt \citep{ho2022classifier}.
The traditional (non-CFG) way to do conditional sampling
is to simply train a model for the conditional distribution $p(x \mid c)$,
including the conditioning $c$
as auxiliary input to the model.
In the context of diffusion, this means training a model
to approximate the conditional score
$s(x, t, c) := \grad_x \log p_t(x \mid c)$
at every noise level $t$, and sampling from this model
via a standard diffusion sampler (e.g. DDPM).
Interestingly, this standard way of conditioning usually does not perform well for 
diffusion models, for reasons that are unclear.
In the text-to-image case for example,
the generated samples tend to be visually incoherent and not faithful to the prompt,
even for large-scale models \citep{ho2022classifier,rombach2022high}.

Guidance methods, such as CFG and its predecessor classifier guidance \citep{sohl2015deep, song2020score, dhariwal2021diffusion},
are methods introduced to improve the quality of conditional samples.
During training, CFG requires learning a model for both the 
unconditional and conditional scores 
($\grad_x \log p_t(x)$ and $\grad_x \log p_t(x|c)$).
Then, during sampling, CFG runs
any standard diffusion sampler (like DDPM or DDIM),
but replaces the true conditional scores with
the ``CFG scores''
\begin{align}
\wt{s}(x, t, c) := \gamma\grad_x \log p_t(x \mid c)  + (1-\gamma) \grad \log p_t(x), 
\label{eq:cfg_score_def}
\end{align}
for some $\gamma > 0$. 
This turns out to produce much more coherent samples in practice,
and so CFG is used in almost all 
modern text-to-image diffusion models \citep{dieleman2022guidance}.
A common intuition for why CFG works
starts by observing that Equation~\eqref{eq:cfg_score_def} is the score of a \emph{gamma-powered} distribution:
\begin{align}
p_{t,\gamma}(x|c) 
&:= p_t(x)^{1-\gamma} p_t(x|c)^\gamma, \label{eq:p_gamma_cfg}
\end{align} which is also proportional to $p_t(x) p_t(c|x)^\gamma$. Raising $p_t(c|x)$ to a power $\gamma > 1$ sharpens the classifier around its modes, thereby emphasizing the ``best'' exemplars of the given class or other conditioner at each noise level. Applying CFG --- that is, running a standard sampler with the usual score replaced by the CFG score at each denoising step --- is supposed to increase the influence of the conditioner on the final samples.

\begin{figure}[t]
    \centering
    \includegraphics[width=1.0\textwidth]{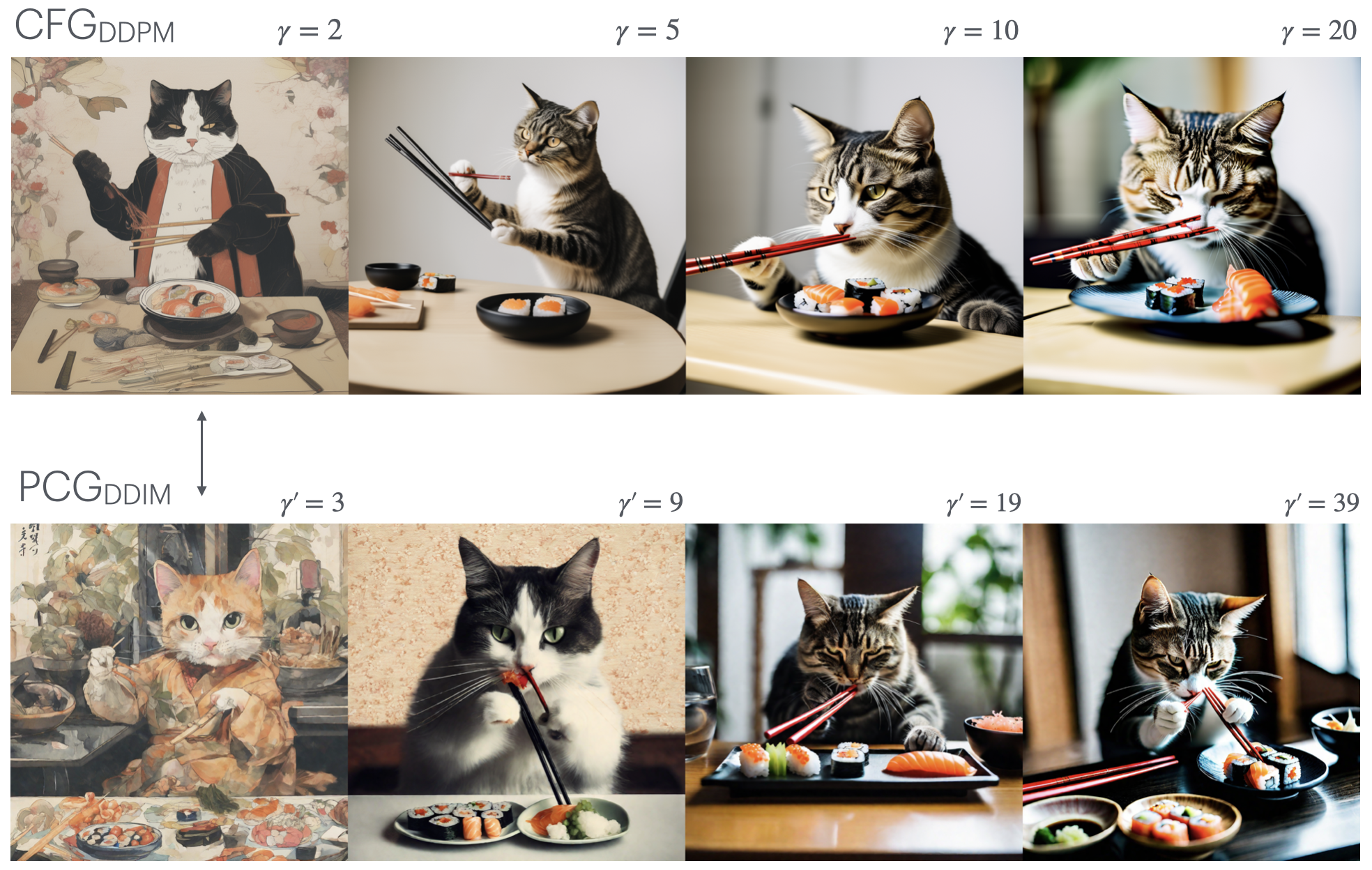}
    \caption{{\bf CFG vs. PCG}. We prove that the DDPM variant of classifier-free guidance (top) is equivalent
    to a kind of predictor-corrector method (bottom),
    in the continuous limit.
    We call this latter method
    ``predictor-corrector guidance'' (PCG), defined
    in Section~\ref{sec:pcg_warmup}.
    The equivalence holds for all
    CFG guidance strengths $\gamma$, with
    corresponding PCG parameter
    $\gamma'=(2\gamma-1)$,
    as given in Theorem~\ref{thm:main}.
    Samples from SDXL with prompt:
    ``photograph of a cat eating sushi using chopsticks''.
    }
    \label{fig:cfg_vs_pcg}
\end{figure}

However, CFG does not inherit the theoretical correctness guarantees
of standard diffusion, because the CFG scores do not necessarily correspond to 
a valid diffusion forward process.
The fundamental issue (which is known, but still worth emphasizing) is that $p_{t,\gamma}(x|c)$ is not the same as the distribution obtained by applying a forward diffusion process to the gamma-powered data distribution $p_{0,\gamma}(x|c)$.
That is, letting $\Nt{t}{p}$ denote the distribution produced by starting from a distribution $p$ and running the diffusion forward process up to time $t$, we have
$$p_{t,\gamma}(x|c) := N_t[p_0(x|c)]^\gamma \cdot N_t[p_0(x)]^{1-\gamma} \neq N_t\left[p_0(x|c)^\gamma p_0(x)^{1-\gamma}\right].$$
Since the distributions $\{p_{t,\gamma}(x|c)\}_t$ \emph{do not correspond to any known forward diffusion process}, we cannot properly interpret the CFG score \eqref{eq:cfg_score_def} as a denoising direction; and using the CFG score in a sampling loop like DDPM or DDIM  is no longer theoretically guaranteed to produce a sample from $p_{0,\gamma}(x|c)$ or any other known distribution.
Although this flaw is known in theory (e.g. \citet{du2023reduce,karras2024guiding}),
it is largely ignored in practice and in much of the literature.
The theoretical foundations of CFG are thus unclear, and important questions remain open.
Is there a principled way to think about why CFG works? And what does it even mean for CFG to ``work'' -- what problem is CFG solving?
We make progress towards understanding the foundations of CFG,
and in the process we uncover several new aspects and 
connections to other methods.

\begin{enumerate}
    \item First, we disprove common misconceptions about CFG by counterexample. We show that the DDPM and DDIM variants of CFG can generate different distributions, neither of which is the gamma-powered data distribution $p_0(x)^{1-\gamma} p_0(x|c)^\gamma$.

    \item We define a
    family of methods called predictor-corrector guidance (PCG),
    as a natural way to approximately sample
    from gamma-powered distributions.
    PCG alternates
    between denoising steps and Langevin dynamics steps.
    Unlike typical predictor-corrector methods \citep{song2020score}, in PCG
    the corrector operates on a different (sharper) distribution
    than the predictor.

    \item We prove that in the continuous-time limit,
    CFG is equivalent to PCG with a careful choice of parameters.
    This gives a principled way to interpret CFG:
    it is implicitly an annealed Langevin dynamics.

    \item
    For demonstration purposes, we implement the PCG sampler
    for Stable Diffusion XL and observe that it produces samples
    qualitatively similar to CFG, with guidance scales determined
    by our theory.
    Further, we explore the design axes exposed by the PCG framework, namely
    guidance strength and Langevin parameters, in order to clarify their respective effects.
\end{enumerate}

\section{Preliminaries}
We adopt the continuous-time
stochastic differential equation (SDE) formalism of diffusion
from \citet{song2020score}. These continuous-time results can be translated to discrete-time algorithms; we give explicit algorithm descriptions for our experiments.

\subsection{Diffusion Samplers}
Forward diffusion processes start with a conditional data distribution $p_0(x|c)$ and gradually corrupt it with Gaussian noise, with $p_t(x|c)$ denoting the noisy distribution at time $t$. 
The forward diffusion runs up to a time $T$ large enough that $p_T$ is approximately pure noise. To sample from the data distribution, we first sample from the Gaussian distribution $p_T$ and then run the diffusion process in reverse (which requires an estimate of the score, usually learned by a neural network). A variety of samplers have been developed to perform this reversal. DDPM \citep{ho2020denoising} and DDIM \citep{song2021denoising} are standard samplers that correspond to discretizations of a reverse-SDE and reverse-ODE, respectively. Due to this correspondence, we refer to the reverse-SDE as DDPM and the reverse-ODE as DDIM for short.
We will mainly consider the \emph{variance-preserving} (VP) diffusion process 
from \citet{ho2020denoising}, although most of our discussion applies equally to other settings (such as variance-exploding). The forward process, reverse-SDE, and equivalent reverse-ODE for the VP conditional diffusion are \citep{song2020score}
\begin{align}
\textsf{Forward SDE}: dx &= -\half \beta_{t} x dt + \sqrt{\beta_{t}} dw. \label{eq:ddpm_sde} \\
\textsf{DDPM SDE}: \quad
dx &=
-\half \beta_{t} x ~dt 
- \beta_t \grad_x \log p_t(x|c) dt
+ \sqrt{\beta_{t}} d\bar{w}  \label{eq:ddpm} \\
\textsf{DDIM ODE}: \quad
dx &= 
-\half \beta_{t} x~dt 
- \half \beta_t \grad_x \log p_t(x|c) dt. \label{eq:ddim}
\end{align}
The unconditional version of each sampler simply replaces $p_t(x|c)$ with $p_t(x)$.
Note that the \emph{score} $\grad_x \log p_t(x|c)$ appears in both \eqref{eq:ddpm} and \eqref{eq:ddim}. Intuitively, the score points in a direction toward higher probability, and so it helps to reverse the forward diffusion process. The score is unknown in general, but can be learned via standard diffusion training methods.

\subsection{Classifier-Free Guidance}

CFG replaces the usual conditional score $\nabla_x \log p_t(x|c)$ in~\eqref{eq:ddpm} or~\eqref{eq:ddim} at each timestep $t$ with the alternative score $\grad_x \log p_{t, \gamma}(x|c)$.
In SDE form, the CFG updates are
\begin{align}
\textsf{$\cfg_\ddpm$}: \quad
dx &=
- \half \beta_{t} x ~dt 
- \beta_t \grad_x \log p_{t, \gamma}(x|c) dt
+ \sqrt{\beta_{t}} d\bar{w} \label{line:ddpm-cfg} \\
\textsf{$\cfg_\ddim$}: \quad
dx &= 
-\half \beta_{t} x~dt 
- \half \beta_t \nabla \log p_{t, \gamma}(x|c) dt \label{line:ddim-cfg}, \\
\text{where } \grad_x \log p_{t, \gamma}(x|c) &= (1-\gamma)\nabla_x \log p_t(x) + \gamma \nabla_x \log p_t(x | c). \notag
\end{align}

\subsection{Langevin Dynamics}
Langevin dynamics \citep{rossky1978brownian,parisi1981correlation}
is another sampling method, which starts from an arbitrary initial distribution and iteratively transforms it into a desired one.
Langevin dynamics (LD) is given by the following SDE \citep{robert1999monte}
\begin{align}
    dx &= \frac{\epsilon}{2} \nabla \log \rho(x) dt + \sqrt{\epsilon} dw. \label{eq:ld}
\end{align}
LD converges (under some assumptions)
to the steady-state $\rho(x)$ \citep{roberts1996exponential}. That is, letting $\rho_s(x)$ denote the solution of LD at time $s$, we have $\lim_{s \to \infty} \rho_s(x) = \rho(x)$. Similar to diffusion sampling, LD requires the score of the desired distribution $\rho$ (or a learned estimate of it).
\section{Misconceptions about CFG}
\label{sec:misconceptions}

\begin{figure}
    \centering
    \includegraphics[width=0.49\textwidth]{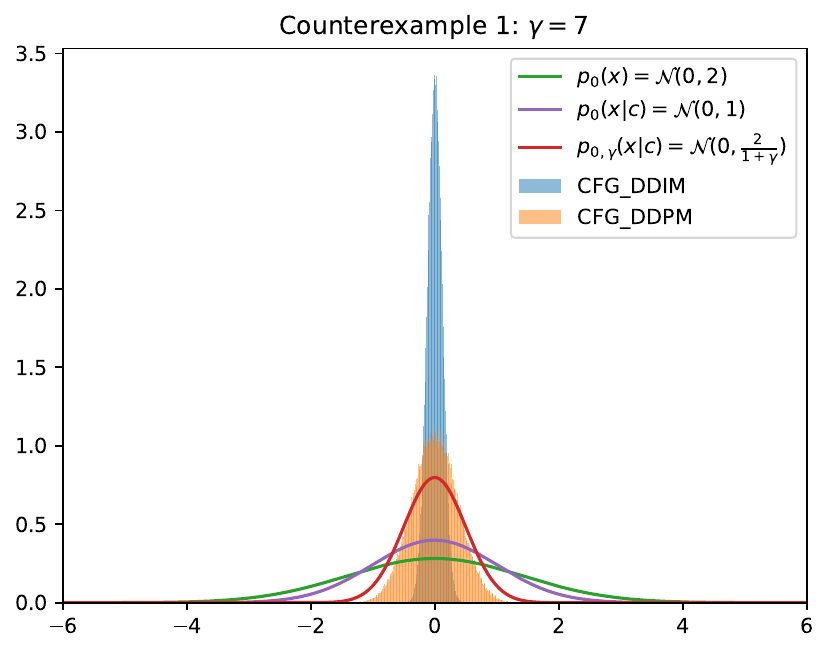}
    \includegraphics[width=0.49\textwidth]{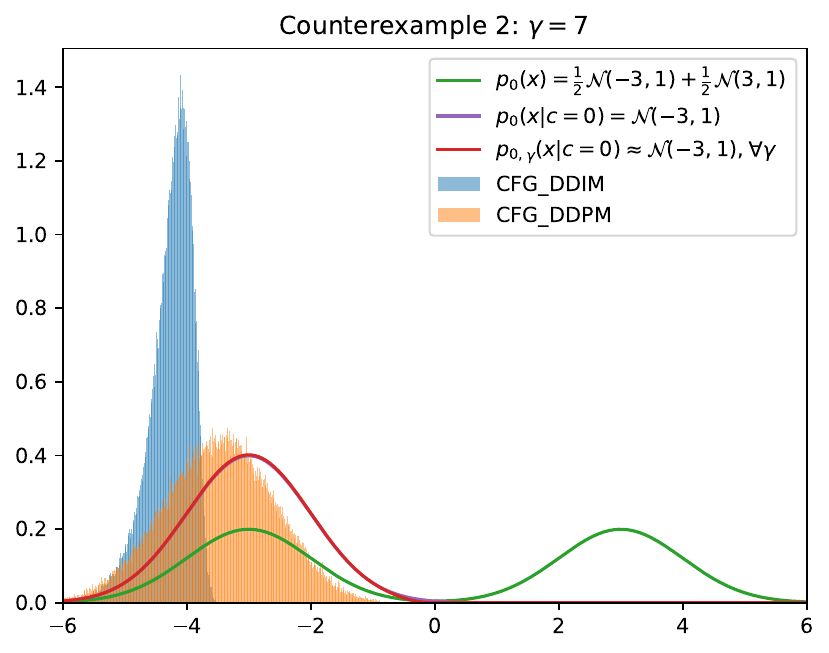}
    \caption{{\bf Counterexamples: $\cfg_\ddim \neq \cfg_\ddpm \neq$ gamma-powered.} 
    $\cfg_\ddim$ and $\cfg_\ddpm$ do not generate the same output distribution,
    even when using the same score function.
    Moreover, neither generated distribution
    is the gamma-powered distribution $p_{0,\gamma}(x|c)$.
    (Left) Counterexample 1 (section~\ref{sec:counterex1}): $\cfg_\ddim$ yields a sharper distribution than $\cfg_\ddpm$, and both are sharper than $p_{0,\gamma}(x|c)$.
    (Right) Counterexample 2 (section~\ref{sec:counterex2}): Neither $\cfg_\ddim$ nor $\cfg_\ddpm$ yield even a scaled version of the gamma-powered distribution $p_{0,\gamma}(x|c) = \cN(-3,1)$. The $\cfg_\ddpm$ distribution is mean-shifted relative to $p_{0,\gamma}(x|c)$. The $\cfg_\ddim$ distribution is mean-shifted and not even Gaussian (note the asymmetrical shape).
    }
    \label{fig:ddpm_vs_ddim_counterexamples}
\end{figure}

\bigskip

We first observe that the exact definition of CFG matters: specifically, the sampler with which it used. Without CFG, DDPM and DDIM generate equivalent distributions. However, we will prove that with CFG, DDPM and DDIM
can generate different distributions, as follows:

\begin{theorem}[DDIM $\ne$ DDPM; informal ] 
There exists a joint distribution $p(x, c)$ over 
inputs $x \in \R$ and conditioning $c \in \R$, such that the following holds.
Consider generating a sample via CFG with conditioning $c=0$,
guidance-scale $\gamma \gg 0$, and using either DDPM or DDIM samplers.
Then, the generated distributions will be approximately
\begin{align}
    \hat{p}_{\mathrm{ddpm}} \approx \cN(0, \gamma^{-1}) ;
    \quad
    \hat{p}_{\mathrm{ddim}} \approx \cN(0, 2^{-\gamma}).
\end{align}
In particular, the DDIM variant of CFG is exponentially sharper than the DDPM variant. 
\end{theorem}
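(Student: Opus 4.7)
The natural strategy is to construct a jointly Gaussian model for $(x,c)$ in which every score is linear in $x$, so that both samplers preserve Gaussianity with zero mean and the entire analysis reduces to tracking a single scalar variance through an ODE. Concretely, I would take $p_0(x) = \cN(0,\sigma^2)$ and $p_0(x \mid c=0) = \cN(0,\sigma_c^2)$ with $\sigma_c \ll \sigma$ (the conditional is sharper than the marginal), perhaps arising from $p(x,c)$ with $p(x\mid c) = \cN(c,\sigma_c^2)$ and $p(c) = \cN(0,\sigma^2-\sigma_c^2)$. Under the VP forward process, both marginal and conditional stay zero-mean Gaussian with variances $v_1(t),v_0(t)$ that are explicit affine functions of $\bar\alpha_t$. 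The CFG score is then linear, $\nabla_x \log p_{t,\gamma}(x \mid 0) = -x / v_\gamma(t)$, where $1/v_\gamma(t) = \gamma/v_0(t) + (1-\gamma)/v_1(t)$, and for a large guidance scale $\gamma$ one has $1/v_\gamma(t) \approx \gamma/v_0(t)$ on all but an initial segment of the trajectory.

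\textbf{DDIM analysis.} Plugging this score into~\eqref{line:ddim-cfg} gives the deterministic linear ODE
\begin{equation*}
\frac{dx}{dt} = \tfrac{1}{2}\beta_t\bigl(1/v_\gamma(t)-1\bigr)x,
\end{equation*}
whose solution is $x(0) = x(T)\exp\!\bigl(-\tfrac{1}{2}\int_0^T \beta_t(1/v_\gamma(t)-1)\,dt\bigr)$. Since the sampler is initialized from $x(T)\sim \cN(0,1)$ and the transformation is deterministic and linear, the endpoint is Gaussian with variance $\exp\!\bigl(-\int_0^T \beta_t(1/v_\gamma(t)-1)\,dt\bigr)$. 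For large $\gamma$, this integral is $\sim \gamma \cdot C$ for a constant $C := \int_0^T \beta_t/v_0(t)\,dt$ determined by the model, which I would arrange (by choice of $\sigma_c$ and the terminal noise level) to equal $\log 2$, yielding the claimed $\cN(0,2^{-\gamma})$.

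\textbf{DDPM analysis.} The corresponding reverse SDE~\eqref{line:ddpm-cfg} is also linear with Gaussian additive noise, so its solution is Gaussian with zero mean and a variance $V(t)$ satisfying an inhomogeneous linear ODE of the form
\begin{equation*}
\dot V = \beta_t\bigl(2/v_\gamma(t)-1\bigr)V - \beta_t
\end{equation*}
(with the appropriate sign in reverse time). For large $\gamma$, the coefficient of $V$ dominates and is strongly contracting, so $V$ rapidly locks onto its instantaneous pseudo-steady-state $V^\star(t) = 1/(2/v_\gamma(t)-1)\approx v_0(t)/(2\gamma)$, giving the claimed $\cN(0,\gamma^{-1})$ scaling (up to the model-dependent constant). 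The qualitative point is that stochastic noise injection of size $\beta_t$ per unit time balances the sharpening drift at a variance that is only \emph{polynomially} small in $\gamma$, whereas DDIM has no such noise floor and the contraction compounds \emph{exponentially} in $\gamma$.

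\textbf{Main obstacles.} The calculations themselves are routine Gaussian bookkeeping once the linearity is observed; the real work is (i) choosing the joint distribution and noise schedule so the two asymptotics come out to exactly the advertised forms $\gamma^{-1}$ and $2^{-\gamma}$ rather than some $c_1/\gamma$ and $c_2^{-\gamma}$, and (ii) rigorously justifying the pseudo-steady-state approximation for DDPM, i.e., showing that $V(t)$ tracks $V^\star(t)$ with error $o(1/\gamma)$ over the full time horizon. Both can be handled by a standard relaxation estimate on the linear ODE for $V-V^\star$, exploiting that the contraction rate is $\Theta(\gamma)$ while $V^\star$ varies at rate $O(1)$.
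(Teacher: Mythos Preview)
Your overall strategy---a jointly Gaussian construction so that all scores are linear in $x$, reducing both samplers to scalar linear ODEs/SDEs for the variance---is exactly the paper's, and your qualitative reasoning (DDIM compounds the $\gamma$-fold contraction exponentially; DDPM's injected noise sets a $\Theta(1/\gamma)$ floor) is correct. The differences are in execution.

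First, the paper works with the simple \emph{variance-exploding} process $dx = dw$ rather than VP. With the specific choice $p_0(x)=\cN(0,2)$, $p_0(x\mid c{=}0)=\cN(0,1)$, the noisy variances are just $1+t$ and $2+t$, so both the DDIM ODE and the DDPM SDE are solved in closed form by elementary integrating factors, yielding the \emph{exact} output variances $2^{1-\gamma}$ (DDIM) and $\tfrac{2-2^{2-2\gamma}}{2\gamma-1}$ (DDPM), with no asymptotics needed. Your VP setup works in principle, but the $\bar\alpha_t$ factors make the integrals $\int\beta_t/v_0(t)\,dt$ non-elementary, which is why you end up having to ``arrange'' the constant and invoke a pseudo-steady-state approximation.

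Second, and relatedly, for DDPM the paper does not use your relaxation argument $V\to V^\star$; it simply solves the linear SDE exactly (It\^o isometry / integrating factor) and then reads off the $T\to\infty$ limit. This sidesteps both obstacles you identified: there is no constant to tune, and no tracking estimate to prove. Your route would work, but the paper's VE choice collapses the problem to a two-line calculation and delivers exact formulas valid for all $\gamma$, not just $\gamma\gg 1$.
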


Next, we disprove the misconception that CFG 
generates the gamma-powered distribution data:

\begin{theorem}[CFG $\neq$ gamma-sharpening, informal]
There exists a joint distribution $p(x, c)$ and a $\gamma > 0$
such that neither $\cfg_\ddim$ 
nor $\cfg_\ddpm$ produces the gamma-powered distribution $p_{0,\gamma}(x|c) \propto p_0(x)^{1-\gamma} p_0(x|c)^\gamma$.
\end{theorem}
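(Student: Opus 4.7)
The plan is to exhibit an explicit one-dimensional joint $p(x,c)$ and a specific $\gamma$ for which $p_{0,\gamma}(x\mid c)$, the output of $\cfg_\ddim$, and the output of $\cfg_\ddpm$ can all be computed in closed form, and then to check by direct algebra that the three distributions disagree. A natural first attempt is fully Gaussian: take $p_0(x) = \cN(0, \sigma_u^2)$ and $p_0(x\mid c) = \cN(\mu_c, \sigma_c^2)$. Crucially the variances must satisfy $\sigma_u \neq \sigma_c$; with equal variances, a short computation shows that the family $\{p_{t,\gamma}(x\mid c)\}_t$ coincides with the VP forward diffusion of $p_{0,\gamma}(x\mid c)$, so CFG samples correctly and there is no counterexample. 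Unequal variances are precisely what breaks this consistency between the instantaneous CFG distributions and any valid forward process.

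\textbf{Reduction to linear SDEs.} Under the VP forward process, $p_t(x)$ and $p_t(x\mid c)$ remain Gaussian at every $t$, so the CFG score $\grad_x \log p_{t,\gamma}(x\mid c) = (1-\gamma)\grad_x \log p_t(x) + \gamma \grad_x \log p_t(x\mid c)$ is affine in $x$. This identifies $p_{t,\gamma}(x\mid c)$ as an explicit Gaussian $\cN(M_t, V_t)$ with closed-form $(M_t, V_t)$; in particular $p_{0,\gamma}(x\mid c) = \cN(M_0, V_0)$ is known. Because the CFG score is affine, both the $\cfg_\ddpm$ SDE~\eqref{line:ddpm-cfg} and the $\cfg_\ddim$ ODE~\eqref{line:ddim-cfg} are linear in $x$, so starting from $p_T \approx \cN(0,1)$ they each produce Gaussian outputs. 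The terminal mean and variance can be read off by integrating a linear ODE for $\mathbb{E}[x_t]$ together with, in the SDE case, a linear ODE for $\mathrm{Var}(x_t)$ derived from It\^o's lemma.

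\textbf{Showing distinctness.} With all three candidate distributions expressed as Gaussians in the parameters $(\sigma_u, \sigma_c, \mu_c, \gamma, \{\beta_t\})$, it suffices to exhibit one parameter setting for which the three mean-variance pairs are mutually distinct (e.g.\ $\gamma = 3$, $\sigma_u^2 = 2$, $\sigma_c^2 = 1$, $\mu_c = -1$). The mismatch is structural rather than a numerical accident: the closed-form $(M_t, V_t)$ does not satisfy the Fokker-Planck equation of the VP forward process --- one checks this by differentiating $V_t$ in $t$ and comparing to the expected VP drift of the variance --- so neither the reverse-SDE nor the probability-flow reverse-ODE of the VP process, when run with the CFG score, can transport $p_T$ exactly to $p_{0,\gamma}$.

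\textbf{Main obstacle.} The only nontrivial step is the bookkeeping for the $\cfg_\ddpm$ output, which requires integrating a time-varying linear SDE and tracking both mean and variance; the $\cfg_\ddim$ calculation is a direct linear ODE. To additionally recover the qualitative strengthening shown in Figure~\ref{fig:ddpm_vs_ddim_counterexamples} --- that $\cfg_\ddim$ need not even be Gaussian --- one can replace $p_0(x)$ by a mixture of two Gaussians while keeping $p_0(x\mid c)$ a single Gaussian; then $\grad_x \log p_t(x)$ is nonlinear, the $\cfg_\ddim$ ODE is a nonlinear deterministic map, and the pushforward of a Gaussian through it is generically non-Gaussian (detectable, e.g., by computing nonzero skewness at $t=0$). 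Either construction establishes the theorem.
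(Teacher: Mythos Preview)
Your proposal is correct and essentially matches the paper's proof: Counterexample~1 in Section~\ref{sec:counterex1} takes exactly $\sigma_u^2=2$, $\sigma_c^2=1$ (with $\mu_c=0$), solves the resulting linear ODE/SDE in closed form to obtain three distinct zero-mean Gaussians, and Counterexample~2 uses the two-component mixture you describe to show the $\cfg_\ddim$ output can be non-Gaussian. The only practical difference is that the paper carries out the computation under the simple variance-exploding forward process rather than VP, which makes the integrating-factor solution elementary and independent of any $\beta_t$ schedule; your VP variant is equally valid but would require fixing a schedule to produce explicit numbers.
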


We prove both claims in the next section using simple Gaussian constructions.

\subsection{Counterexample 1}
\label{sec:counterex1}

We first present a setting that allows us to \emph{exactly} solve the ODE and SDE dynamics of CFG in closed-form, and hence to find the exact distribution sampled by running CFG. This would be intractable in general, but it is possible for a specific problem, as follows.

Consider the setting where $p_0(x)$ and $p_0(x|c=0)$ are both zero-mean Gaussians, but with different variances. Specifically, $(x_0,c)$ are jointly Gaussian, with $p(c) = \cN(0,1)$, $p_0(x|c) = c + \cN(0,1)$. Therefore
\begin{align}
p_0(x) &= \cN(0,2) \notag \\
p_0(x|c=0) &= \cN(0,1) \notag \\
p_{0, \gamma}(x|c=0) &= \cN(0, \frac{2}{\gamma+1})
\label{eq:1d_gauss_problem_1}
\end{align}
For this problem, we can solve $\cfg_\ddim$ \eqref{line:ddim-cfg} and $\cfg_\ddpm$ \eqref{line:ddpm-cfg} analytically; that is, we solve initial-value problems for the reversed dynamics to find the sampled distribution of $\hat x_t$ in terms of the initial-value $x_T$. Applying these results to $t=0$ and averaging over the known Gaussian distribution of $x_T$ gives the exact distribution of $\hat x_0$ that CFG samples. The full derivation is in Appendix \ref{app:prob_1}. The final CFG-sampled distributions are:
\begin{align}
    \cfg_\ddpm: \quad \hat x_0 &\sim \mathcal{N} \left( 0, \frac{2 - 2^{2-2\gamma}}{2\gamma-1} \right) \label{eq:prob_1_ddpm} \\
    \cfg_\ddim: \quad \hat x_0 &\sim \mathcal{N} \left(0, 2^{1-\gamma} \right).
    \label{eq:prob_1_ddim}
\end{align}
This shows that for any $\gamma > 1$, the $\cfg_\ddim$ distribution is sharper than the $\cfg_\ddpm$ distribution, and both are sharper than the gamma-powered distribution $p_{0, \gamma}(x|c=0)$. (Even though the distributions all have the same mean, their different variances make them distinct.) In fact, for $\gamma \gg 1$, the variance of DDPM-CFG is approximately $\frac{2}{2\gamma-1}$, which is about twice the variance of $p_{0, \gamma}(x|c=0)$. In Figure~\ref{fig:ddpm_vs_ddim_counterexamples}, we compare the $\cfg_\ddim$ and $\cfg_\ddpm$ distributions -- sampled using an exact denoiser (see Appendix \ref{app:exact_denoiser}) within DDIM/DDPM sampling loops -- to the unconditional, conditional, and gamma-powered distributions.

\subsection{Counterexample 2}
\label{sec:counterex2}

In the above counterexample, the $\cfg_\ddim$, $\cfg_\ddpm$, and gamma-powered distributions had different variances but the same Gaussian form, so one might wonder whether the distributions differ only by a scale factor in general. This is not the case, as we can see in a different counterexample that reveals greater qualitative differences, in particular a symmetry-breaking behavior of CFG. 

In Counterexample 2, the unconditional distribution is a Gaussian mixture with two clusters with equal weights and variances, and means at $\pm \mu$.
\begin{align}
    c &\in \{0, 1\}, \quad p(c = 0) = \half \notag \\ 
    p_0(x_0 | c=0) &= \cN(-\mu, 1), \quad  p_0(x_0 | c=1) = \cN(\mu, 1) \notag\\
    p_0(x_0) &= \half p_0(x_0 | c=0) + \half p_0(x_0 | c=1)
    \label{eq:1d_gauss_problem_2}
\end{align}
If the means are sufficiently separated ($\mu \gg 1$), then the gamma-powered distribution for $\gamma \ge 1$ is approximately equal to the conditional distribution, i.e.
$ p_{0, \gamma} (x|c) \approx p_0(x|c),$
due to the near-zero-probability valley between the conditional densities
(see Appendix \ref{app:prob_2}). However, for sufficiently high noise the clusters begin to merge, and $p_{t, \gamma} (x|c) \neq p_t(x|c)$. In particular, $p_{0, \gamma} (x|c)$ is approximately Gaussian with mean $\pm \mu$, but $p_{t, \gamma} (x|c) \neq p_t(x|c)$ is not.
Although we cannot solve the CFG ODE and SDE in this case, we can empirically sample the $\cfg_\ddim$ and $\cfg_\ddpm$ distributions using an exact denoiser and compare them to the gamma-powered distribution. In particular, we see that neither $\cfg_\ddim$ nor $\cfg_\ddpm$ is Gaussian with mean $\pm \mu$, hence neither is a scaled version of the gamma-powered distribution. The results are shown in Figure \ref{fig:ddpm_vs_ddim_counterexamples}.
\section{CFG as a predictor-corrector}

The previous sections illustrated the subtlety in understanding CFG.
We can now state our main structural characterization,
that CFG is equivalent to a special kind of \emph{predictor-corrector} method \citep{song2020score}.

\subsection{Predictor-Corrector Guidance}
\label{sec:pcg_warmup}
As a warm-up, suppose we actually wanted to sample from the
gamma-powered distribution:
\begin{align}
\label{eqn:pgamma}
p_{\gamma}(x|c) \propto p(x)^{1-\gamma} p(x|c)^\gamma.
\end{align}
A natural strategy is to run Langevin dynamics w.r.t. $p_\gamma$.
This is possible in theory because 
we can compute the score of $p_\gamma$ from the known scores of $p(x)$
and $p(x\mid c)$:
\begin{align}
\grad_x \log p_\gamma(x \mid c) = (1-\gamma) \grad_x \log p(x)
+ \gamma \grad_x \log p(x \mid c).
\end{align}
However this won't work in practice, due to the well-known issue
that vanilla Langevin dynamics has impractically slow mixing times for
many distributions of interest \citep{song2019generative}.
The usual remedy for this is to use some kind of annealing,
and the success of diffusion teaches us that the diffusion process
defines a good annealing path \citep{song2020score,du2023reduce}.
Combining these ideas yields an algorithm 
remarkably similar to the predictor-corrector methods introduced in \citet{song2020score}.
For example, consider the following diffusion-like iteration,
starting from $x_T \sim \cN(0, \sigma_T)$ at $t=T$.
At timestep $t$,
\begin{enumerate}
    \item Predictor: Take one diffusion denoising step (e.g. DDIM or DDPM) w.r.t. $p_t(x \mid c)$,
    using score $ \grad_x \log p_t(x \mid c)$, to move to time $t' = t-\dt$.
    \item Corrector: Take one (or more) Langevin dynamics steps
    w.r.t. distribution $p_{t', \gamma}$,
    using score 
    $$\grad_x \log p_{t',\gamma}(x \mid c) = (1-\gamma) \grad_x \log p_{t'}(x)
+ \gamma \grad_x \log p_{t'}(x \mid c).$$
\end{enumerate}

\begin{wrapfigure}{r}{0.5\textwidth}
  \begin{center}
    \includegraphics[width=0.5\textwidth]{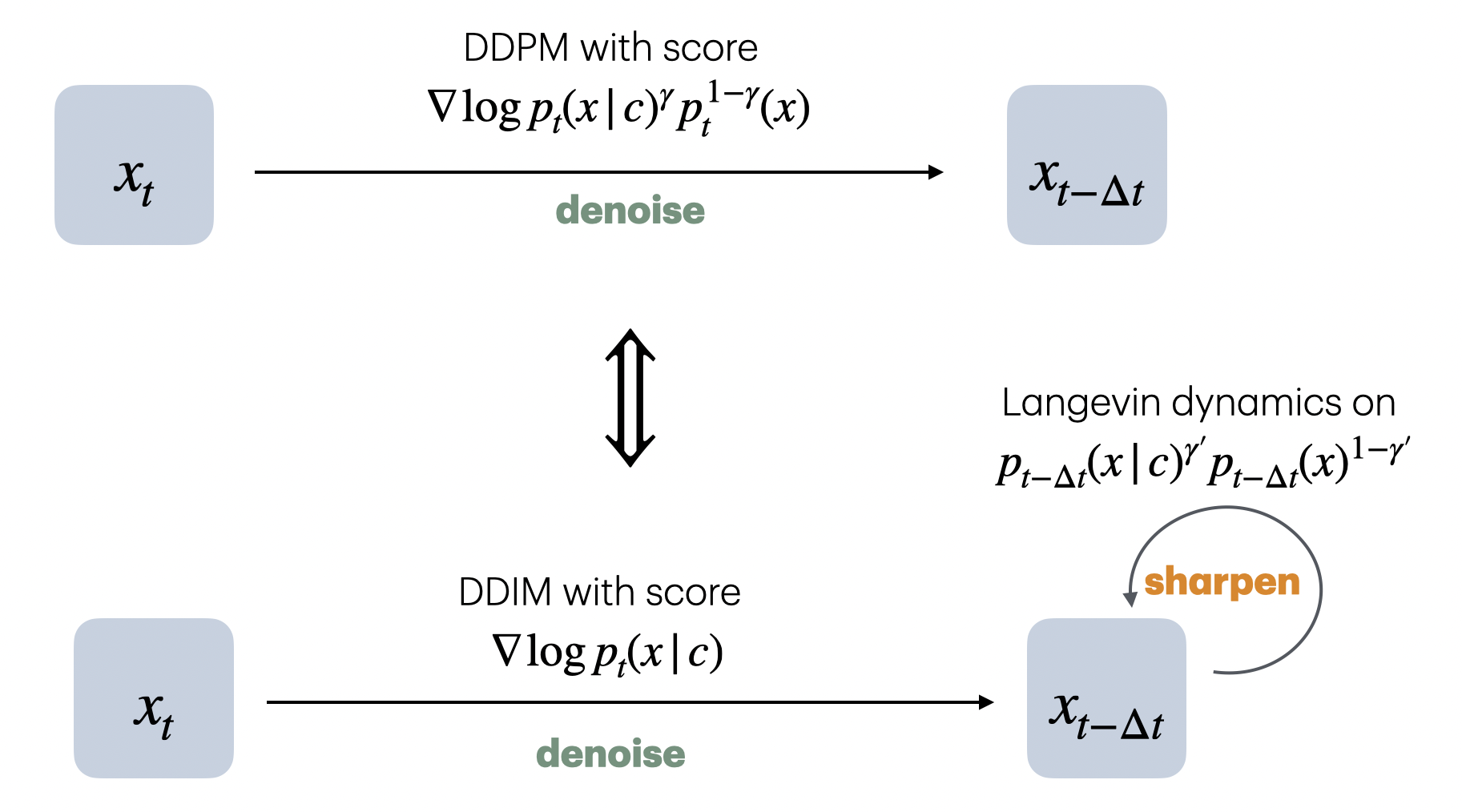}
  \end{center}
  \caption{CFG is equivalent to PCG for particular parameter choices.}
\end{wrapfigure}

It is reasonable to expect that running this iteration down to $t=0$
will produce a sample from approximately $p_\gamma(x|c)$,
since it can be thought of as annealed Langevin dynamics where the predictor
is responsible for the annealing. We name this algorithm predictor-corrector guidance (PCG). 
Notably, PCG differs from the predictor-corrector algorithms
in \citet{song2020score} because our predictor 
and corrector operate w.r.t. \emph{different} annealing distributions:
the predictor tries to anneal along the set of distributions
$\{p_{t}(x|c)\}_{t \in [0, 1]}$,
whereas the corrector anneals
along the set $\{p_{t, \gamma}(x|c)\}_{t \in [0, 1]}$.
Remarkably, it turns out that for specific choices of the denoising predictor and Langevin step size,
PCG with $K=1$ is equivalent (in the SDE limit) to CFG, but with a different $\gamma$.

\subsection{SDE limit of PCG}
\label{sec:sde-limit-pcg}

\algorithmstyle{ruled}
\begin{algorithm}[t]
\newcommand\mycommfont[1]{\normalfont\textcolor{black}{#1}}
\SetCommentSty{mycommfont}
\DontPrintSemicolon
\SetNoFillComment
\caption{$\pcg_\ddim$, theory. (see Algorithm~\ref{alg:pcg-explicit} for practical implementation.)}\label{alg:pcg}
\SetKwInput{KwData}{Constants}
\SetKwComment{Comment}{$\triangleright$\ }{}
\KwIn{Conditioning $c$, guidance weight $\gamma \geq 0$}
\KwData{$\beta_t := \beta(t)$ from \citet{song2020score}}
$x_1 \sim \cN(0, I)$ \;
\For{$(t=1-\dt;~ t \geq 0;~ t \gets t- \dt)$}{
    $s_{t+\dt} := \grad \log p_{t+\dt}(x_{t+\dt}|c)$ \;
    $x_t \gets x_{t+\dt} + \half \beta_{t} (x_{t+\dt} + s_{t+\dt})  \dt$ \Comment*{DDIM step on $p_{t+\dt}(x+\dt|c)$}
    $\eps := \beta_t \dt$ \Comment*{Langevin step size}
    \For{$k = 1,\dots K$}{
        $\eta \sim \cN(0, I_d)$ \;
        $s_{t, \gamma} := (1-\gamma)\nabla \log p_t(x_t) + \gamma \nabla \log p_t(x_t | c)$ \;
        $x_t \gets x_t +\frac{\eps}{2} s_{t, \gamma} + \sqrt{\eps} \eta$ \Comment*{Langevin dynamics on $p_{t, \gamma}(x|c)$}
    }
}
\Return $x_0$
\end{algorithm}

Consider the version of PCG defined in Algorithm \ref{alg:pcg}, which uses DDIM as predictor and a particular LD on the gamma-powered distribution as corrector. We take $K=1$, i.e. a single LD step per iteration.
Crucially, we set the LD step size such that the Langevin noise scale exactly matches
the noise scale of a (hypothetical) DDPM step at the current time
(similar to \citet{du2023reduce}).
In the limit as $\dt \to 0$, Algorithm \ref{alg:pcg} becomes the following SDE (see Appendix \ref{app:pcg_sde}):
\begin{align}
    dx &=
    \underbrace{\Delta \ddim(x, t)}_{\text{Predictor}}
    + 
    \underbrace{\Delta \LDG(x, t, \gamma)}_{\text{Corrector}}
    =: \Delta \pcg_\ddim(x, t, \gamma), \label{eq:pcg_sde} \\
    \text{where } & \Delta \ddim(x, t) = -\half \beta_{t} (x + \nabla \log p_t(x|c))  dt \notag\\
    & \Delta \LDG(x, t, \gamma) = -\half \beta_t
    \Bigl( (1-\gamma)\nabla \log p_t(x) + \gamma \nabla \log p_t(x | c) \Bigr) dt 
+ \sqrt{\beta_t} d\bar{w}. \notag
\end{align}

Above, $\Delta \ddim(x, t)$ is the \emph{differential} of the DDIM ODE \eqref{eq:ddim}, i.e. the ODE can be written as $dx = \Delta \ddim(x, t)$. 
And $\Delta \LDG(x, t, \gamma)$, where $\textsf{G}$ stands for ``guidance'', is the limit as $\dt \to 0$ of the Langevin dynamics step in PCG, which behaves like a differential of LD (see Appendix \ref{app:pcg_sde}).

We can now show that the PCG SDE \eqref{eq:pcg_sde} matches CFG, but with a different $\gamma$.  In the statement, $\Delta \cfg_\ddpm(x, t, \gamma)$ denotes the differential of the $\cfg_\ddpm$ SDE \eqref{line:ddpm-cfg}, similar to the notation above.
This result is trivial to prove using our definitions,
but the statement itself appears to be novel.
\begin{theorem}[CFG is predictor-corrector]
\label{thm:main}
In the SDE limit, CFG
is equivalent to a predictor-corrector. That is, the following differentials are equal:
{
\normalfont
\begin{align}
\Delta \cfg_\ddpm(x, t, \gamma)
&= 
\Delta \ddim(x, t) + \Delta \LDG(x, t, 2\gamma-1) 
=: \Delta \pcg_{\ddim}(x, t, 2\gamma-1)
\end{align}
}
Notably, the guidance scales of CFG
and the above Langevin dynamics are not identical.
\end{theorem}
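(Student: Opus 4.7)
The plan is to verify the equality of the two SDE differentials by expanding both sides from their definitions and matching coefficients term-by-term. Since the theorem is an identity between two stochastic differentials with the same Brownian motion source, it suffices to check that the drift vector fields agree (the diffusion coefficient $\sqrt{\beta_t}$ appears on both sides and matches immediately).

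First, I would expand $\Delta \cfg_\ddpm(x,t,\gamma)$ from \eqref{line:ddpm-cfg} by substituting $\nabla \log p_{t,\gamma}(x|c) = (1-\gamma)\nabla \log p_t(x) + \gamma\, \nabla \log p_t(x|c)$, which gives drift
\[
-\tfrac{1}{2}\beta_t x\, dt - \beta_t(1-\gamma)\nabla \log p_t(x)\, dt - \gamma\beta_t \nabla \log p_t(x|c)\, dt.
\]
Next, I would write out $\Delta \ddim(x,t) + \Delta \LDG(x,t,\gamma')$ for an unknown corrector parameter $\gamma'$: the DDIM differential contributes $-\tfrac{1}{2}\beta_t x\, dt - \tfrac{1}{2}\beta_t \nabla \log p_t(x|c)\, dt$, while the Langevin corrector contributes $-\tfrac{1}{2}\beta_t(1-\gamma')\nabla \log p_t(x)\, dt - \tfrac{1}{2}\beta_t \gamma' \nabla \log p_t(x|c)\, dt + \sqrt{\beta_t}\, d\bar w$. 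Collecting terms, the aggregate drift has coefficient $-\tfrac{1}{2}\beta_t(1-\gamma')$ on $\nabla \log p_t(x)$ and coefficient $-\tfrac{1}{2}\beta_t(1+\gamma')$ on $\nabla \log p_t(x|c)$.

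Finally, I would equate the two sides. Matching the coefficient of $\nabla \log p_t(x|c)$ requires $\tfrac{1}{2}(1+\gamma') = \gamma$, i.e.\ $\gamma' = 2\gamma - 1$. Matching the coefficient of $\nabla \log p_t(x)$ requires $\tfrac{1}{2}(1-\gamma') = 1-\gamma$, which gives the same value $\gamma' = 2\gamma - 1$, so the system is consistent and the identity holds with this choice. The $-\tfrac{1}{2}\beta_t x\, dt$ linear drift and the Brownian term automatically match.

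There is no real obstacle here: once the LD-corrector step in Algorithm~\ref{alg:pcg} is taken in the $\dt \to 0$ limit (deferred to Appendix~\ref{app:pcg_sde}, and already encoded in the definition of $\Delta \LDG$), the theorem reduces to a one-line bookkeeping exercise. The only conceptual subtlety worth flagging in the write-up is why the DDIM half-score on $\nabla \log p_t(x|c)$ combines with the Langevin half-score contribution to reproduce exactly the full-score $\gamma$-weighted DDPM drift — this is the arithmetic identity $\tfrac{1}{2} + \tfrac{1}{2}(2\gamma - 1) = \gamma$, which also explains the shift from $\gamma$ (CFG) to $2\gamma-1$ (PCG corrector strength) highlighted in the theorem statement.
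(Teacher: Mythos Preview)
Your proposal is correct and essentially identical to the paper's proof: both expand the DDIM and Langevin-corrector differentials, collect coefficients on $\nabla\log p_t(x)$ and $\nabla\log p_t(x|c)$, and read off the relation $\gamma' = 2\gamma-1$ (the paper writes the same computation in the inverse direction, deriving $\gamma' = \tfrac{\gamma+1}{2}$ from the PCG side). There is nothing to add.
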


\begin{proof}
\begin{align*}
\Delta \pcg_{\ddim}(x, t, \gamma) &= \Delta\ddim(x, t) + \Delta\LDG(x, t, \gamma) \\
&= -\half \beta_{t} (x + (1-\gamma)\nabla \log p_t(x) + (1+\gamma)\nabla \log p_t(x|c)) dt
+ \sqrt{\beta_t} d\bar{w} \\
&= -\half \beta_{t} x \dt - \beta_t \grad_x \log p_{t, \gamma'}(x|c) \dt + \sqrt{\beta_{t}} d\bar{w}, \quad \gamma' := \frac{\gamma}{2} + \frac{1}{2} \\
&= \Delta \cfg_{\ddpm}(x, t, \gamma')
\end{align*}
\end{proof}

As an aside, taking $\gamma=1$ in Theorem~\ref{thm:main}
recovers the standard fact that DDPM is equivalent, in the limit,
to DDIM interleaved with LD (e.g. \citet{karras2022elucidating}).
Because for $\gamma=1$, $\cfg_\ddpm$ is just DDPM, so Theorem~\ref{thm:main}
reduces to: $\Delta\ddpm(x, t) = \Delta\ddim(x, t) + \Delta\LDG(x, t, 1)$.
This fact, that in the non-CFG case
Langevin dynamics is equivalent to iteratively noising-then-denoising,
has been used implicitly or explicitly in a number of prior works.
For example, \citet{karras2022elucidating} use a ``churn'' operation
in their stochastic sampler, 
and \citet{lugmayr2022repaint} incorporate a conceptually similar
noise-then-denoise step in their inpainting pipeline.

\section{Discussion and Related Works}
\newcommand{\qual}{\textsf{PerceivedQuality}}

There have been many recent works toward understanding CFG.
To better situate our work, it helps 
to first discuss the overall research agenda.

\subsection{Understanding CFG: The Big Picture} 
We want to study the question of why CFG helps in practice: specifically, why it improves both image quality and prompt adherence,
compared to conditional sampling.
We can approach this question by
applying a standard generalization decomposition.
Let $p(x|c)$ be the ``ground truth'' population distribution; 
let $p^*_\gamma(x|c)$ be the distribution generated by the ideal CFG sampler, which exactly solves the CFG reverse SDE for the ground-truth scores (note that at $\gamma=1$, $p^*_1(x|c) = p(x|c)$); and let $\hat{p}_\gamma(x|c)$ denote the distribution of the real CFG sampler, with learnt scores and finite discretization.
Now, for any image distribution
$q$, let $\qual[q] \in \R$
denote a measure of perceived sample quality of this distribution to humans.
We cannot mathematically specify this notion of quality,
but we will assume it exists for analysis.
Notably, $\qual$ is \emph{not} 
a measurement of how close a distribution is
to the ground-truth $p(x|c)$ --- it is possible for a generated distribution to appear even ``higher quality'' than the ground-truth, for example.
We can now decompose:

\begin{align}
\label{eqn:gengap}
\underbrace{\qual[\hat{p}_{\gamma}]}_{\text{Real CFG}}
=
\underbrace{\qual[p^*_\gamma]}_{\text{Ideal CFG}}
-
\underbrace{\left(
\qual[p^*_\gamma]
- 
\qual[\hat{p}_\gamma]
\right)}_{\text{Generalization Gap}}.
\end{align}

Therefore, if the LHS increases with $\gamma$,
it must be because at least one of the two occurs:
\begin{enumerate}
    \item The ideal CFG sampler improves in quality with increasing $\gamma$.
    That is, CFG distorts the population distribution
    in a favorable way
    (e.g. by sharpening it, or otherwise).
    \item The generalization gap decreases with increasing $\gamma$.
    That is, CFG has a type of regularization effect,
    bringing population and empirical processes closer.
\end{enumerate}
In fact, it is likely that both occur.
The original motivation for CG and CFG
involved the first effect: CFG was intended to produce
``lower-temperature'' samples from a sharpened population distribution
\citep{dhariwal2021diffusion,ho2022classifier}.
This is particularly relevant if the model is trained on poor-quality datasets (e.g. cluttered images from the web), so we want
to use guidance to sample from a higher-quality distribution (e.g. images of an isolated subject).
On the other hand, recent studies have given evidence for the second effect.
For example, \citet{karras2024guiding} argues that unguided diffusion sampling 
produces ``outliers,'' which are avoided when using guidance --- this can be
thought of as guidance reducing the generalization gap, rather than improving the
ideal sampling distribution.
Another interpretation of the second effect is that guidance
could enforce a good inductive bias:
it ``simplifies'' the family of possible
output distributions in some sense,
and thus simplifies the learning problem,
reducing the generalization gap. Figure \ref{fig:example_4} shows a example where this occurs.
Finally, this generalization decomposition applies
to any intervention to the SDE, not just increasing guidance strength.
For example, increasing the 
Langevin steps in PCG (parameter $K$)
also shrinks the generalization gap, since
it reduces the discretization error.

\begin{figure}[p]
    \centering
    \includegraphics[width=1.0\textwidth]{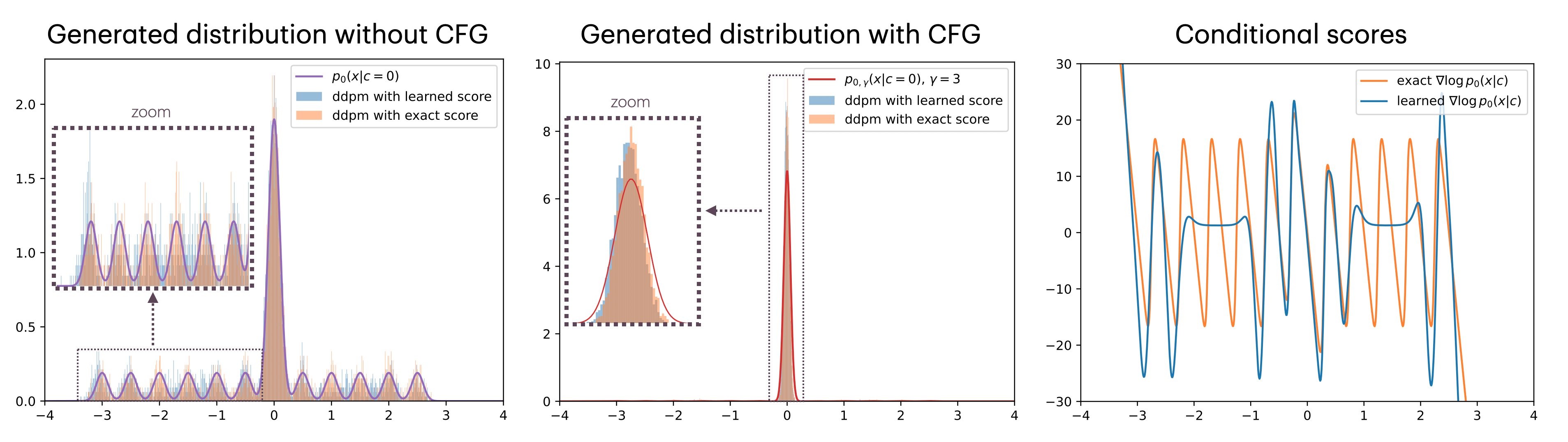}
    \caption{{\bf An example where guidance benefits generalization.} 
    Suppose that the conditional distribution for $c=0$ is a a GMM with a dominant cluster, as shown in purple, and the unconditional distribution is uniform (details in Appendix \ref{app:prob_4}). We sample with DDPM using exact scores vs. scores learned by training a small MLP with early stopping. The scores are learned more accurately near the dominant cluster. (Left) For conditional sampling (no guidance), DDPM is expected to sample from the conditional distribution (purple curve).  However, DDPM-with-learned-scores (orange) samples less accurately than DDPM-with-exact-scores (blue) away from the dominant cluster (where the learned scores are inaccurate) (note the prevalence of blue samples in low-probability regions). (Center) With guidance $\gamma=3$, $p_{0,\gamma}(x|c=0)$ (red) and both samplers concentrate around the dominant cluster (where the learned scores are accurate), reducing the generalization gap between the learned and exact models. (Right) Exact vs. learned condition scores $\grad \log p(x|c=0)$. }
    \label{fig:example_4}
\end{figure}

In this framework,
our work makes progress towards understanding 
both terms on the RHS of Equation~\ref{eqn:gengap}, in different ways.
For the first term,
we identify structural properties of ideal CFG,
by showing that $p^*_\gamma$ can be equivalently generated by
a standard technique (an annealed Langevin dynamics).
For the second term,
the PCG framework
highlights the ways in which errors in the 
learned score can contribute to a
generalization gap, during both the denoising step
and the LD step (the latter would move toward an inaccurate steady-state distribution).

\subsection{Open Questions and Limitations}
In addition to the above, there are a number of other questions left open by our work.
First, we study only the stochastic variant of CFG (i.e. $\cfg_\ddpm$),
and it is not clear how to adapt our analysis to the more commonly used deterministic variant
($\cfg_\ddim$).
This is subtle because the two CFG variants
can behave very differently in theory,
but appear to behave similarly in practice.
It is thus open to identify plausible theoretical conditions
which explain this similarity\footnote{
Curiously, $\cfg_\ddim$ is the correct probability-flow ODE
for $\cfg_\ddpm$ if and only if the true intermediate 
distribution at time $t$ is $p_{t, \gamma}$. 
However we know this is not the true
distribution in general, from Section~\ref{sec:misconceptions}.
}; we give a suggestive experiment in Figure~\ref{fig:counterexample_extras}.
More broadly, it is open to find explicit characterizations of CFG's output distribution,
in terms of the original 
$p(x)$ and $p(x|c)$
--- although it is
possible tractable expressions do not exist.

Finally, we presented PCG primarily as a tool to understand CFG,
not as a practical algorithm in itself.
Nevertheless, the PCG framework outlines a broad family of guided samplers,
which may be promising to explore in practice.
For example, the predictor can be any diffusion denoiser,
including CFG itself.
The corrector can operate on any 
distribution with a known score, including compositional distributions as in \citet{du2023reduce}, or any other distribution that might help sharpen or otherwise improve on the conditional distribution. Finally, the number of Langevin steps could be adapted to the timestep, similar to \citet{kynkaanniemi2024applying}, or alternative samplers could be considered \citep{du2023reduce, neal2012mcmc, ma2015complete}.

\subsection{Stable Diffusion Examples}
\label{sec:experiments}

\begin{figure}[p]
    \centering
    \includegraphics[width=1.0\textwidth]{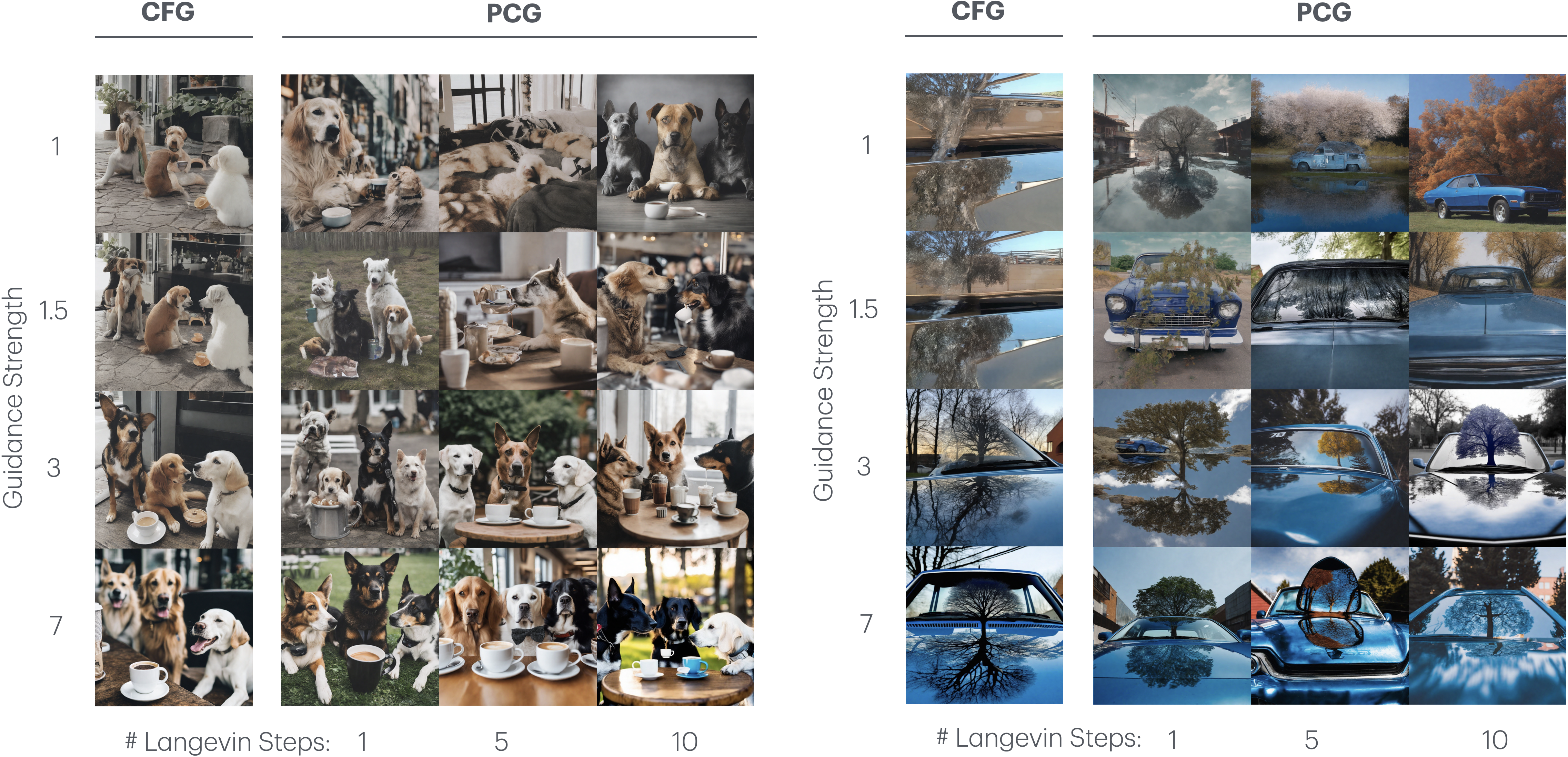}
    \caption{{\bf Effect of Guidance and Correction.}
    Each grid shows SDXL samples using $\pcg_\ddim$,
    as the guidance strength $\gamma$ and Langevin iterations $K$ are varied.
    Left: ``photograph of a dog drinking coffee with his friends''. Right: ``a tree reflected in the hood of a blue car''.
    (Zoom in to view).
    }
    \label{fig:pcg_grid}
\end{figure}

We include several examples 
running predictor-corrector guidance on 
Stable Diffusion XL \citep{podell2023sdxl}.
These serve primarily to sanity-check our theory,
not as a suggestion for practice.
For all experiments, we use $\pcg_\ddim$
as implemented explicitly
in Algorithm~\ref{alg:pcg-explicit}\footnote{Note that Algorithm~\ref{alg:pcg} and~\ref{alg:pcg-explicit}
have slightly different DDIM steps, but this just corresponds to two different discretizations of the same process.
Algorithm~\ref{alg:pcg} uses the first-order Euler–Maruyama
discretization known as ``reverse SDE'' \citep{song2020score},
which is convenient for our mathematical analysis.
Algorithm~\ref{alg:pcg-explicit} uses the
original DDIM discretization \citep{song2021denoising},
equivalent to a more sophisticated integrator \citep{lu2022dpm},
which is more common in practice.}.
Note that PCG offers a more flexible design space than standard CFG;
e.g. we can run multiple corrector steps for each denoising step
to improve the quality of samples (controlled by parameter $K$
in Algorithm~\ref{alg:pcg-explicit}).

\paragraph{CFG vs. PCG.}
Figure~\ref{fig:cfg_vs_pcg} illustrates the equivalence of Theorem~\ref{thm:main}: we compare $\cfg_\ddpm$ with guidance $\gamma$
to $\pcg_\ddim$ with exponent $\gamma' := (2\gamma-1)$.
We run $\cfg_\ddpm$ with 200 denoising steps,
and $\pcg_\ddim$ with 100 denoising steps
and $K=1$ Langevin corrector step
per denoising step.
Corresponding samples appear to have
qualitatively similar guidance strengths, consistent with our theory.

\paragraph{Effects of Guidance and Corrector.}
In Figure~\ref{fig:pcg_grid} we show samples from
$\pcg_\ddim$, varying the guidance strength and Langevin iterations
(i.e. parameters $\gamma$ and $K$ respectively
in Algorithm~\ref{alg:pcg-explicit}).
We also include standard $\cfg_\ddim$ samples for comparison.
All samples used 1000 denoising steps for the base predictor.
Overall, we observed that increasing Langevin steps
tends to improve the overall image quality,
while increasing guidance strength tends to 
improve prompt adherence.
In particular, sufficiently many Langevin steps
can sometimes yield high-quality conditional samples,
even \emph{without} any guidance ($\gamma=1$);
see Figure~\ref{fig:panda} in the Appendix for another
such example.
This is consistent with the observations of 
\citet{song2020score} on unguided predictor-corrector methods.
It is also related to the findings of \citet{du2023reduce} on MCMC methods:
\citet{du2023reduce}  
similarly use an annealed Langevin dynamics
with reverse-diffusion annealing,
although they focus on general compositions of distributions
rather than the specific gamma-powered distribution of CFG.

Notice that in Figure~\ref{fig:pcg_grid},
increasing the number of Langevin steps
appears to also increase the ``effective'' guidance strength.
This is because 
the dynamics does not fully mix:
one Langevin step ($K=1$) does not
suffice to fully converge the
intermediate distributions
to $p_{t, \gamma}$.

\section{Conclusion}

In this paper, we have shown that while CFG is not a diffusion sampler on the gamma-powered data distribution $p_0(x)^{1-\gamma} p_0(x|c)^\gamma$, it can be understood as a particular kind of predictor-corrector, where the predictor is a DDIM denoiser, and the corrector at each step $t$ is one step of Langevin dynamics on the gamma-powered noisy distribution $p_t(x)^{1-{\gamma'}} p_t(x|c)^{\gamma'}$, with $\gamma' = (2\gamma-1)$. Although \citet{song2020score}'s Predictor-Corrector algorithm has not been widely adopted in practice, perhaps due to its computation expense relative to samplers like DPM++ \citep{lu2022dpmplusplus}, it turns out to provide a lens to understand the unreasonable practical success of CFG.
On a practical note, PCG encompasses a rich design space of possible predictors and correctors for future exploration, that may help improve the prompt-alignment, diversity, and quality of diffusion generation.

{\bf Acknowledgements.}
We thank
David Berthelot,
James Thornton,
Jason Ramapuram,
Josh Susskind,
Miguel Angel Bautista Martin,
Jiatao Gu,
Zijing Ou
and
Rob Brekelmans
for helpful discussions and feedback throughout this work.
\clearpage
\newpage

\bibliographystyle{plainnat}
\bibliography{refs}

\begin{thebibliography}{24}
\providecommand{\natexlab}[1]{#1}
\providecommand{\url}[1]{\texttt{#1}}
\expandafter\ifx\csname urlstyle\endcsname\relax
  \providecommand{\doi}[1]{doi: #1}\else
  \providecommand{\doi}{doi: \begingroup \urlstyle{rm}\Url}\fi

\bibitem[Dhariwal and Nichol(2021)]{dhariwal2021diffusion}
Prafulla Dhariwal and Alexander Nichol.
\newblock Diffusion models beat gans on image synthesis.
\newblock \emph{Advances in neural information processing systems},
  34:\penalty0 8780--8794, 2021.

\bibitem[Dieleman(2022)]{dieleman2022guidance}
Sander Dieleman.
\newblock Guidance: a cheat code for diffusion models, 2022.
\newblock URL \url{https://benanne.github.io/2022/05/26/guidance.html}.

\bibitem[Du et~al.(2023)Du, Durkan, Strudel, Tenenbaum, Dieleman, Fergus,
  Sohl-Dickstein, Doucet, and Grathwohl]{du2023reduce}
Yilun Du, Conor Durkan, Robin Strudel, Joshua~B Tenenbaum, Sander Dieleman, Rob
  Fergus, Jascha Sohl-Dickstein, Arnaud Doucet, and Will~Sussman Grathwohl.
\newblock Reduce, reuse, recycle: Compositional generation with energy-based
  diffusion models and mcmc.
\newblock In \emph{International conference on machine learning}, pages
  8489--8510. PMLR, 2023.

\bibitem[Ho and Salimans(2022)]{ho2022classifier}
Jonathan Ho and Tim Salimans.
\newblock Classifier-free diffusion guidance.
\newblock \emph{arXiv preprint arXiv:2207.12598}, 2022.

\bibitem[Ho et~al.(2020)Ho, Jain, and Abbeel]{ho2020denoising}
Jonathan Ho, Ajay Jain, and Pieter Abbeel.
\newblock Denoising diffusion probabilistic models.
\newblock \emph{Advances in neural information processing systems},
  33:\penalty0 6840--6851, 2020.

\bibitem[Karras et~al.(2022)Karras, Aittala, Aila, and
  Laine]{karras2022elucidating}
Tero Karras, Miika Aittala, Timo Aila, and Samuli Laine.
\newblock Elucidating the design space of diffusion-based generative models,
  2022.

\bibitem[Karras et~al.(2024)Karras, Aittala, Kynk{\"a}{\"a}nniemi, Lehtinen,
  Aila, and Laine]{karras2024guiding}
Tero Karras, Miika Aittala, Tuomas Kynk{\"a}{\"a}nniemi, Jaakko Lehtinen, Timo
  Aila, and Samuli Laine.
\newblock Guiding a diffusion model with a bad version of itself.
\newblock \emph{arXiv preprint arXiv:2406.02507}, 2024.

\bibitem[Kynk{\"a}{\"a}nniemi et~al.(2024)Kynk{\"a}{\"a}nniemi, Aittala,
  Karras, Laine, Aila, and Lehtinen]{kynkaanniemi2024applying}
Tuomas Kynk{\"a}{\"a}nniemi, Miika Aittala, Tero Karras, Samuli Laine, Timo
  Aila, and Jaakko Lehtinen.
\newblock Applying guidance in a limited interval improves sample and
  distribution quality in diffusion models.
\newblock \emph{arXiv preprint arXiv:2404.07724}, 2024.

\bibitem[Lu et~al.(2022{\natexlab{a}})Lu, Zhou, Bao, Chen, Li, and
  Zhu]{lu2022dpm}
Cheng Lu, Yuhao Zhou, Fan Bao, Jianfei Chen, Chongxuan Li, and Jun Zhu.
\newblock Dpm-solver: A fast ode solver for diffusion probabilistic model
  sampling in around 10 steps.
\newblock \emph{Advances in Neural Information Processing Systems},
  35:\penalty0 5775--5787, 2022{\natexlab{a}}.

\bibitem[Lu et~al.(2022{\natexlab{b}})Lu, Zhou, Bao, Chen, Li, and
  Zhu]{lu2022dpmplusplus}
Cheng Lu, Yuhao Zhou, Fan Bao, Jianfei Chen, Chongxuan Li, and Jun Zhu.
\newblock Dpm-solver++: Fast solver for guided sampling of diffusion
  probabilistic models.
\newblock \emph{arXiv preprint arXiv:2211.01095}, 2022{\natexlab{b}}.

\bibitem[Lugmayr et~al.(2022)Lugmayr, Danelljan, Romero, Yu, Timofte, and
  Van~Gool]{lugmayr2022repaint}
Andreas Lugmayr, Martin Danelljan, Andres Romero, Fisher Yu, Radu Timofte, and
  Luc Van~Gool.
\newblock Repaint: Inpainting using denoising diffusion probabilistic models.
\newblock In \emph{Proceedings of the IEEE/CVF conference on computer vision
  and pattern recognition}, pages 11461--11471, 2022.

\bibitem[Ma et~al.(2015)Ma, Chen, and Fox]{ma2015complete}
Yi-An Ma, Tianqi Chen, and Emily Fox.
\newblock A complete recipe for stochastic gradient mcmc.
\newblock \emph{Advances in neural information processing systems}, 28, 2015.

\bibitem[Neal(2012)]{neal2012mcmc}
Radford~M Neal.
\newblock Mcmc using hamiltonian dynamics.
\newblock \emph{arXiv preprint arXiv:1206.1901}, 2012.

\bibitem[Nichol et~al.(2021)Nichol, Dhariwal, Ramesh, Shyam, Mishkin, McGrew,
  Sutskever, and Chen]{nichol2021glide}
Alex Nichol, Prafulla Dhariwal, Aditya Ramesh, Pranav Shyam, Pamela Mishkin,
  Bob McGrew, Ilya Sutskever, and Mark Chen.
\newblock Glide: Towards photorealistic image generation and editing with
  text-guided diffusion models.
\newblock \emph{arXiv preprint arXiv:2112.10741}, 2021.

\bibitem[Parisi(1981)]{parisi1981correlation}
Giorgio Parisi.
\newblock Correlation functions and computer simulations.
\newblock \emph{Nuclear Physics B}, 180\penalty0 (3):\penalty0 378--384, 1981.

\bibitem[Podell et~al.(2023)Podell, English, Lacey, Blattmann, Dockhorn,
  M{\"u}ller, Penna, and Rombach]{podell2023sdxl}
Dustin Podell, Zion English, Kyle Lacey, Andreas Blattmann, Tim Dockhorn, Jonas
  M{\"u}ller, Joe Penna, and Robin Rombach.
\newblock Sdxl: Improving latent diffusion models for high-resolution image
  synthesis.
\newblock \emph{arXiv preprint arXiv:2307.01952}, 2023.

\bibitem[Robert et~al.(1999)Robert, Casella, and Casella]{robert1999monte}
Christian~P Robert, George Casella, and George Casella.
\newblock \emph{Monte Carlo statistical methods}, volume~2.
\newblock Springer, 1999.

\bibitem[Roberts and Tweedie(1996)]{roberts1996exponential}
Gareth~O Roberts and Richard~L Tweedie.
\newblock Exponential convergence of langevin distributions and their discrete
  approximations.
\newblock 1996.

\bibitem[Rombach et~al.(2022)Rombach, Blattmann, Lorenz, Esser, and
  Ommer]{rombach2022high}
Robin Rombach, Andreas Blattmann, Dominik Lorenz, Patrick Esser, and Bj{\"o}rn
  Ommer.
\newblock High-resolution image synthesis with latent diffusion models.
\newblock In \emph{Proceedings of the IEEE/CVF conference on computer vision
  and pattern recognition}, pages 10684--10695, 2022.

\bibitem[Rossky et~al.(1978)Rossky, Doll, and Friedman]{rossky1978brownian}
Peter~J Rossky, Jimmie~D Doll, and Harold~L Friedman.
\newblock Brownian dynamics as smart monte carlo simulation.
\newblock \emph{The Journal of Chemical Physics}, 69\penalty0 (10):\penalty0
  4628--4633, 1978.

\bibitem[Sohl-Dickstein et~al.(2015)Sohl-Dickstein, Weiss, Maheswaranathan, and
  Ganguli]{sohl2015deep}
Jascha Sohl-Dickstein, Eric Weiss, Niru Maheswaranathan, and Surya Ganguli.
\newblock Deep unsupervised learning using nonequilibrium thermodynamics.
\newblock In \emph{International conference on machine learning}, pages
  2256--2265. PMLR, 2015.

\bibitem[Song et~al.(2021)Song, Meng, and Ermon]{song2021denoising}
Jiaming Song, Chenlin Meng, and Stefano Ermon.
\newblock Denoising diffusion implicit models.
\newblock In \emph{International Conference on Learning Representations}, 2021.
\newblock URL \url{https://openreview.net/forum?id=St1giarCHLP}.

\bibitem[Song and Ermon(2019)]{song2019generative}
Yang Song and Stefano Ermon.
\newblock Generative modeling by estimating gradients of the data distribution.
\newblock \emph{Advances in neural information processing systems}, 32, 2019.

\bibitem[Song et~al.(2020)Song, Sohl-Dickstein, Kingma, Kumar, Ermon, and
  Poole]{song2020score}
Yang Song, Jascha Sohl-Dickstein, Diederik~P Kingma, Abhishek Kumar, Stefano
  Ermon, and Ben Poole.
\newblock Score-based generative modeling through stochastic differential
  equations.
\newblock \emph{arXiv preprint arXiv:2011.13456}, 2020.
\newblock URL \url{https://arxiv.org/pdf/2011.13456.pdf}.

\end{thebibliography}

\appendix
\section{1D Gaussian Counterexamples}

\begin{figure}[h]
    \centering
    \includegraphics[width=0.49\textwidth]{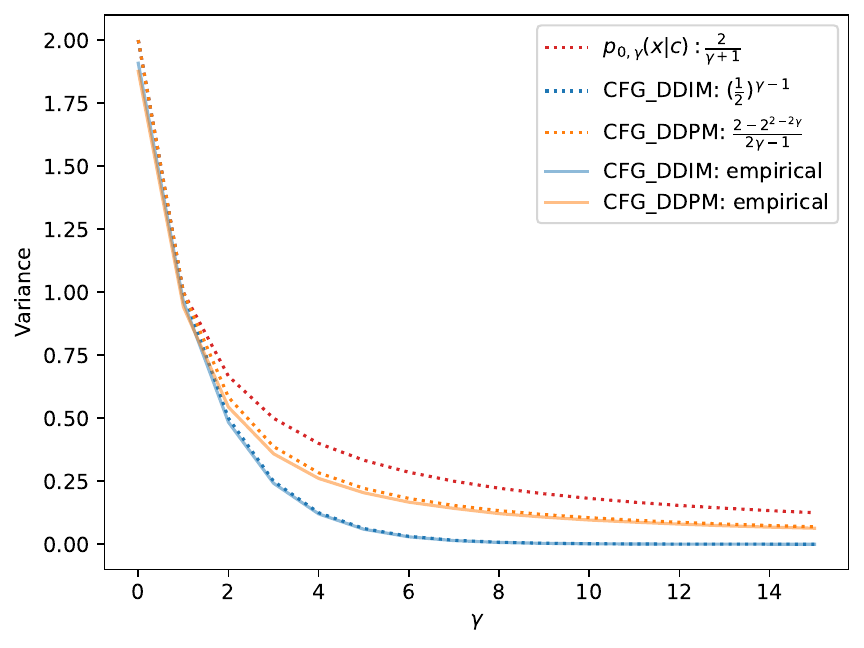}
    \includegraphics[width=0.49\textwidth]{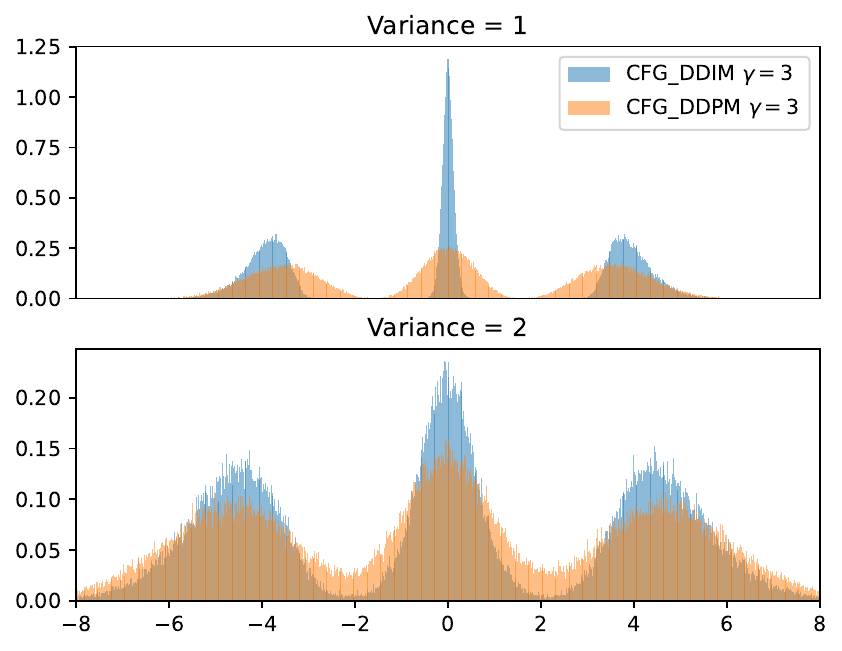}
    \caption{(Left) For Counterexample 1 (section \ref{sec:counterex1}), we plot the empirical and theoretical variance of the gamma-powered, $\cfg_\ddim$, and $\cfg_\ddpm$ distributions, over a range of values of $\gamma$. The theoretical predictions are given by equations \eqref{eq:prob_1_ddim} and \eqref{eq:prob_1_ddpm}, and the empirical distributions are sampled using an exact denoiser. This verifies the theoretical predictions and illustrates the decreasing variance from $p_{0,\gamma}$ to $\cfg_\ddpm$ to $\cfg_\ddim$.
    (Right) For counterexample 3 (section \ref{app:prob_3} with different choices of variance ($\sigma=1$ and $\sigma=2$), we compare $\cfg_\ddim$ and $\cfg_\ddpm$. Increasing the variance makes the two CFG samplers more similar. Also note that the $\cfg_\ddim$ distribution is symmetric around the center cluster, but asymmetric around the side clusters. This experiment suggests that multiple clusters and greater overlap between classes can help symmetrize and reduce the difference between $\cfg_\ddim$ and $\cfg_\ddpm$}
    \label{fig:counterexample_extras}
\end{figure}

\subsection{Counterexample 1 Detail}
\label{app:prob_1}
Counterexample 1 (equation \ref{eq:1d_gauss_problem_1}) has
\begin{align*}
p_0(x) &\sim \cN(0,2) \\
p_0(x|c=0) &\sim \cN(0,1).
\end{align*}
The $\gamma$-powered distribution is
\begin{align*}
    p_{0, \gamma}(x|c=0) 
    &= p_{0}(x|c)^{\gamma} p_{c=0}(x)^{1 - \gamma} \notag\\
    &\propto e^{-\frac{\gamma x^2}{2}} e^{-\frac{(1-\gamma) x^2}{4}} 
    = e^{-\frac{(\gamma +1) x^2}{4}} \\
    &\sim \cN(0, \frac{2}{\gamma+1}).
\end{align*}
We consider the simple variance-exploding diffusion defined by the SDE
$$dx = \sqrt{t} dw.$$
The DDIM sampler is a discretization of the reverse ODE
$$ \frac{dx}{dt} = -\half \nabla_x \log p_t(x), $$
and the DDPM sampler is a discretization of the reverse SDE 
$$dx = -\nabla_x \log p_{t, \gamma}(x) dt + d\bar w.$$
For $\cfg_\ddim$ or $\cfg_\ddpm$, we replace the score with CFG score $\nabla_x \log p_{t, \gamma} (x)$.

During training we run the forward process until some time $t=T$, at which point we assume it is fully-noised, so that approximately
$$p_T(x|c=0) \sim \cN(0, T)$$
(in this case the exact distribution $p_T(x|c=0) \sim \cN(0, T+1)$ so we need to choose $T \gg 1$ to ensure sufficient terminal noise). At inference time we choose an initial sample $x_T \sim \cN(0, T)$ and run $\cfg_\ddim$ from $t = T \to 0$ to obtain a final sample $x_0$.

\paragraph{$\cfg_\ddim$}
For Counterexample 1, the $\cfg_\ddim$ ODE has a closed-form solution (derivation in section \ref{app:ode_solutions}):
\begin{align*}
    \cfg_\ddim: \quad 
    \frac{dx}{dt} &= -\half \nabla_x \log p_{t, \gamma}(x) \\
    &= x_t \left( \frac{\gamma}{2 (1 + t)} + \frac{(1-\gamma)}{2 (2 + t)} \right) \\
    \implies x_t &= x_T \sqrt{\frac{(t+1)^{\gamma} (t+2)^{1-\gamma} }{(T+1)^{\gamma} (T+2)^{1-\gamma}}}.
\end{align*}
That is, for a particular initial sample $x_T$, $\cfg_\ddim$ produces the sample $x_t$ at time $t$. Evaluating at $t=0$ and taking the limit as $T \to \infty$ yields the ideal denoised $x_0$ sampled by $\cfg_\ddim$ given an initial sample $x_T$:
\begin{align*}
    \hat x_0^\text{$\cfg_\ddim$} (x_T) &= x_T \sqrt{\frac{2^{1-\gamma} }{(T+1)^{\gamma} (T+2)^{1-\gamma}}} \\
    &\to x_T \sqrt{\frac{2^{1-\gamma} }{T}} \quad \text{as } T \to \infty.
\end{align*}
To get the denoised distribution obtained by reverse-sampling with $\cfg_\ddim$, we need to average over the distribution of $x_T$:
\begin{align*}
    \E_{x_T \sim \cN(0,T)}[\hat x_0^\text{$\cfg_\ddim$}(x_T)]
    = \cN(0, T \frac{2^{1-\gamma} }{T})
    = \cN \left( 0, 2^{1-\gamma} \right).
\end{align*}
which is equation \ref{eq:prob_1_ddim} in the main text.

\paragraph{$\cfg_\ddpm$}
$\cfg_\ddpm$ also has a closed-form solution (derived in section \ref{app:ode_solutions}):
\begin{align*}
    dx &= -\nabla_x \log p_{t, \gamma}(x) dt + d\bar w \\
    &= x \left( \frac{\gamma}{(1 + t)} + \frac{(1-\gamma)}{(2 + t)} \right)dt + d\bar w \\
    \implies x(t)
    &= x_T \frac{(1 + t)^{\gamma}(2+t)^{1-\gamma}}{(1 + T)^{\gamma}(2+T)^{1-\gamma}} 
    + (1 + t)^{\gamma}(2+t)^{1-\gamma} \sqrt{\frac{1}{2\gamma-1}} \sqrt{ \left( \frac{t+1}{t+2} \right)^{1 - 2\gamma} - \left( \frac{T+1}{T+2} \right)^{1 - 2\gamma} } \xi.
\end{align*}
Similar to the $\cfg_\ddim$ argument, we can obtain the final denoised distribution as follows:
\begin{align*}
    \hat x_0^\text{$\cfg_\ddpm$} (x_T) &= x_T \frac{2^{1-\gamma}}{(1 + T)^{\gamma}(2+T)^{1-\gamma}} 
    + 2^{1-\gamma} \sqrt{\frac{1}{2\gamma-1}} \sqrt{ 2^{2\gamma-1} - \left(\frac{T+1}{T+2} \right)^{1 - 2\gamma}} \xi \\ 
    &\to x_T \frac{2^{1-\gamma}}{T} 
    + \sqrt{\frac{2 - 2^{2-2\gamma}}{2\gamma-1}} \xi \quad \text{as } T \to \infty \\ 
    \implies \E_{x_T \sim \cN(0,T)}[\hat x_0^\text{$\cfg_\ddpm$}(x_T)]
    &= \cN \left( 0, T \left(\frac{2^{1-\gamma}}{T} \right)^2 + \frac{2 - 2^{2-2\gamma}}{2\gamma-1}\right) \\
    &\to \cN \left( 0, \frac{2 - 2^{2-2\gamma}}{2\gamma-1}\right),
\end{align*}
which is equation \ref{eq:prob_1_ddpm} in the main text, and for $\gamma \gg 1$ becomes approximately
$$\E_{x_T \sim \cN(0,T)}[\hat x_0^\text{$\cfg_\ddpm$}(x_T)] \approx \cN \left( 0, \frac{2}{2\gamma-1} \right).$$

In Figure \ref{fig:counterexample_extras}, we confirm results (\ref{eq:prob_1_ddpm}, \ref{eq:prob_1_ddim}) empirically.

\subsection{Counterexample 2}
\label{app:prob_2}
Counterexample 2 \eqref{eq:1d_gauss_problem_1} is a Gaussian mixture with equal weights and variances.
\begin{align*}
    c &\in \{0, 1\}, \quad p(c = 0) = \half \notag \\ 
    p_0(x_0 | c) &\sim \cN(\mu^{(c)}, 1), \quad \mu^{(0)} = -\mu, \quad \mu^{(1)} = \mu \notag\\
    p_0(x_0) &\sim \half p_0(x_0 | c=0) + \half p_0(x_0 | c=1).
\end{align*}

We noted in the main text that if $\mu$ is sufficiently large enough that the clusters are approximately disjoint, and $\gamma \ge 1$, then $p_{0, \gamma} (x|c) \approx p_{0} (x|c)$. To see this note that
\begin{align*}
    p_0(x_0) &\approx \half p_0(x_0 | 0) \mathbbm{1}_{x > 0} + \half p_0(x_0 | 1) \mathbbm{1}_{x > 0} \\
    p_{0, \gamma} (x|c) &\propto p_0(x|c)^\gamma p_0(x)^{1-\gamma} \\
    &= p_0(x) \left( \frac{p_0(x|c)}{p_0(x)} \right)^\gamma  \\
    &\propto p_0(x) \left( \mathbbm{1}_{\text{sign}(x) = \mu^{(c)}} \right)^\gamma  \\
    &\approx p_0(x|c) \quad \text{for } \gamma \ge 1.
\end{align*}
However, $p_{t, \gamma}(x|c) \neq p_t(x|c)$ since the noisy distributions do overlap/interact.

We don't have complete closed-form solutions for this problem like we did for Counterexample 1. We have the solution for conditional DDIM for the basic VE process $dx = dw$ (using the results from the previous section):
\begin{align*}
    \text{DDIM on $p_t(x|c)$: } \frac{dx}{dt} &= -\half \nabla_x \log p_t(x|c) \\
    &= -\frac{1}{2(1 + t)} (\mu^{(c)} - x_t) \\
    \implies x(t) &= \mu^{(c)} + (x_T - \mu^{(c)}) \sqrt{\frac{1 + t}{1 + T}},
\end{align*}
but otherwise have to rely on empirical results. We do however have access to the ideal conditional and unconditional denoisers via the scores (Appendix \ref{app:exact_denoiser}):
\begin{align*}
    \nabla_x \log p_t(x|c) &= -\frac{1}{2(1 + t)} (\mu^{(c)} - x_t) \\
    \nabla_x \log p_t(x) &= \frac{\nabla_x p_t(x)}{p_t(x)} 
    = \frac{\half \sum_{c=0,1} \nabla_x p_t(x | c)}{p_t(x)}.
\end{align*}

\subsection{Counterexample 3}
\label{app:prob_3}
We consider a 3-cluster problem to investigate why $\cfg_\ddim$ and $\cfg_\ddpm$ often appear similar in practice despite being different in theory. Counterexample 3 \eqref{eq:1d_gauss_problem_1} is a Gaussian mixture with equal weights and variances. We vary the variance to investigate its effect on CFG.
\begin{align*}
    c &\in \{0, 1, 2\}, \quad p(c) = \frac{1}{3} \quad \forall c \notag \\ 
    p_0(x_0 | c) &\sim \cN(\mu^{(c)}, \sigma), \quad \mu^{(0)} = -3, \quad \mu^{(1)} = 0, \quad \mu^{(2)} = 3 \notag\\
    p_0(x_0) &\sim \frac{1}{3} p_0(x_0 | c=0) + \frac{1}{3} p_0(x_0 | c=1) + \frac{1}{3} p_0(x_0 | c=2).
\end{align*}
We run $\cfg_\ddim$ and $\cfg_\ddpm$ with $\gamma=3$, for $\sigma = 1$ and $\sigma=2$. Results are shown in Figure \ref{fig:counterexample_extras}.

\subsection{Generalization Example 4}
\label{app:prob_4}

We consider a multi-cluster problem to explore the impact of guidance on generalization:
\begin{align}
p_0(x) &\sim \cN(0, 10) \notag \\
p_0(x|c=0) &\sim \sum_i w_i \cN(\mu_i, \sigma) \label{eq:prob_4} \\
\mu &= (-3, -2.5, -2, -1.5, -1, -0.5,  0,  0.5, 1,  1.5,  2,  2.5) \notag \\
w_i &= 0.0476 \quad \forall i \neq 6; \quad w_6 = 0.476 \notag\\
\sigma &= 0.1 \notag
\end{align}
Note that the unconditional distribution is wide enough to be essentially uniform within the numerical support of the conditional distribution. The conditional distribution is a GMM with evenly spaced clusters of equal variance, and all equal weights, except for a ``dominant" cluster in the middle with higher weight. The results are shown in Figure \ref{fig:example_4}.

\subsection{Closed-form ODE/SDE solutions}
\label{app:ode_solutions}

First, we want to solve equations of the general form $\frac{dx}{dt} = -a(t) x + b(t)$, which will encompass the ODEs and SDEs of interest to us. All we need for the ODEs is the special $b(t) = a(t)c$, which is easier.

The main results are
\begin{align}
    \frac{dx}{dt} &= a(t)(c - x) \notag \\
    \implies x(t) &= c + (x_T - c)e^{A(T)-A(t)} \label{eq:special_ode_final} \\
    \text{where } A(t) &= \int a(t) dt \notag 
\end{align}
and
\begin{align}
    \frac{dx}{dt} &= -a(t) x + b(t) \notag\\
    \implies x(t) &= e^{-A(t)} (B(t) - B(T)) +  x_T e^{A(T)-A(t)}  \label{eq:general_ode_final} \\
    \text{where } A(t) &= \int a(t) dt, \quad B(t) = \int e^{A(t)} b(t) dt. \notag 
\end{align}

First let's consider the special case $b(t) = a(t)c$, which is easier. We can solve it (formally) by separable equations:
\begin{align}
    \frac{dx}{dt} &= a(t)(c - x) \notag \\
    \implies \int \frac{1}{c - x} dx &= \int a(t) dt = A(t) \notag \\
    \implies -\log(c-x) &= A(t) + C \notag \\
    \implies c-x &= e^{-A(t) - C} \notag \\
    \implies x(t) &= c + C e^{-A(t)}. \label{eq:special_ode_formal}
\end{align}
Next we need to apply initial conditions to get the right constants.
Remembering that we are actually sampling backward in time from initialization $x_T$, we can solve for the constant $C$ as follows, to obtain result \eqref{eq:special_ode_final}:
\begin{align*}
    x_T &= c + C e^{-A(T)} \\
    \implies C &= e^{A(T)}(x_T - c) \\
    \implies x(t) &= c + (x_T - c)e^{A(T)-A(t)}.
\end{align*}

We will apply this result to $\cfg_\ddim$ shortly, but for now we note that for a VE diffusion $dx = \sqrt{t} dw$ on a Gaussian data distribution $p_0(x) \sim \cN(\mu, \sigma)$ the above result implies the exact DDIM dynamics:
\begin{align*}
    p_t(x) \sim \cN(\mu, \sigma^2 + t) \\
    \text{DDIM on $p_t(x)$: } \frac{dx}{dt} &= -\half \nabla_x \log p_t(x) \\
    &= -\frac{1}{2(\sigma^2 + t)} (\mu - x) \\
    A(t) &= -\half \log(\sigma^2 + t) \\
    \implies x_t &= \mu + (x_T - \mu) e^{A(T) - A(t)} \\
    &= \mu + (x_T - \mu) \sqrt{\frac{\sigma^2 + t}{\sigma^2 + T}}.
\end{align*}
(which makes sense since $x_{t=T} = x_T$ and $\frac{\sqrt{\sigma^2}}{\sqrt{\sigma^2 + T}} \approx 0 \implies x_{t=0} \approx \mu$).
    
Now let's return to the general problem with arbitrary $b(t)$ (we need this for the SDEs). We can use an integrating factor to get a formal solution:
\begin{align}
    \frac{dx}{dt} &= -a(t) x + b(t) \notag \\
    \text{Integrating factor: } & e^{A(t)}, \quad A(t) = \int a(t) dt \notag \\
    \frac{d}{dt} (x(t) e^{A(t)}) &= \left(x'(t) + a(t) x(t) \right)e^{A(t)} \notag \\
    &= b(t) e^{A(t)} \notag \\
    \implies e^{A(t)} x(t) &= \int e^{A(t)} b(t) dt + C\notag  \\
    \implies x(t) &= e^{-A(t)} \int e^{A(t)} b(t) dt + C e^{-A(t)}. \label{eq:general_ode_formal}
\end{align}
Note that if $b(t) = a(t)c$ this reduces to \eqref{eq:special_ode_formal}:
\begin{align*}
    \int e^{-A(t)} e^{A(t)} b(t) dt &= c e^{-A(t)} \int a(t) e^{A(t)} dt = c \\
    \implies x(t) &= c + C e^{-A(t)}.
\end{align*}

Again, we need to apply boundary conditions to get the constant, and remember that we are actually sampling backward in time from initialization $x_T$ to obtain result \eqref{eq:general_ode_final}:
\begin{align*}
    \frac{dx}{dt} &= -a(t) x + b(t) \\
    x_T &= e^{-A(T)} B(T) + C e^{-A(T)}, \quad B(t) := \int e^{A(t)} b(t) dt \\
    \implies C &= e^{A(T)} x_T - B(T) \\
    \implies x(t) &= e^{-A(t)} B(t) + (e^{A(T)} x_T - B(T)) e^{-A(t)} \\
    &= e^{-A(t)} (B(t) - B(T)) +  x_T e^{A(T)-A(t)}.
\end{align*}

Note that for $b(t) = a(t)c$ this reduces \eqref{eq:special_ode_final}:
\begin{align*}
    b(t) &= a(t)c \implies B(t) = ce^{A(t)}  \\
    \implies x(t) &= -c e^{-A(t)} (e^{A(t)} - e^{A(T)}) +  x_T e^{A(T)-A(t)} \\
    &= c + (x_T - c) e^{A(T)-A(t)}.
\end{align*}

\paragraph{Counterexample 1 solutions}

To solve the $\cfg_\ddim$ ODE for Counterexample 1 (Equation \ref{eq:1d_gauss_problem_1}) we apply result \eqref{eq:special_ode_final}:
\begin{align*}
    \frac{dx}{dt} &= a(t)(c - x) \implies x(t) = c + (x_T - c)e^{A(T) - A(t)} \\
    a(t) &= -\frac{\gamma}{2 (1 + t)} - \frac{(1-\gamma)}{2 (2 + t)}, \quad c = 0 \\
    A(t) &= -\half \int \frac{\gamma}{(1 + t)} + \frac{(1-\gamma) }{(2 + t)} dt \\
    &= -\half (\gamma \log(t+1) + (\gamma-1)\log(t+2)) \\
    \implies x_t &= x_T \sqrt{\frac{(t+1)^{\gamma} (t+2)^{1-\gamma} }{(T+1)^{\gamma} (T+2)^{1-\gamma}}}.
\end{align*}

To solve the $\cfg_\ddpm$ SDE for Counterexample 1 (Equation \ref{eq:1d_gauss_problem_1}), we first apply \eqref{eq:general_ode_final} to the SDE with $b(t) = -\xi(t)$:
\begin{align*}
    \frac{dx}{dt} &= -a(t)x - \xi(t), \quad \< \xi(t) \> = 0, \quad \< \xi(t), \xi(t') \> = \delta(t-t') \\
    \implies x(t) &= x_T e^{A(T)-A(t)} + e^{-A(t)} (B(t) - B(T)), \quad A(t) = \int a(t) dt, \quad B(t) = -\int e^{A(t)} \xi(t) dt \\
    &= x_T e^{A(T)-A(t)} + e^{-A(t)} \sqrt{\int_t^T e^{2A(t)} dt} \xi.
\end{align*}
Now, plugging in the DDPM drift term we find that
\begin{align*}
    a(t) &= -\frac{\gamma}{(1 + t)} - \frac{(1-\gamma)}{(2 + t)}\\
    A(t) &= -\gamma \log(1 + t) - (1-\gamma) \log(2+t) \\
    e^{A(t)} &= (1 + t)^{-\gamma}(2+t)^{-1 + \gamma}\\
    \int e^{2A(t)} dt &= \int (1 + t)^{-2\gamma}(2+t)^{-2 + 2\gamma} dt \\
    &= -\frac{1}{2\gamma-1} \left( \frac{t+1}{t+2} \right)^{1 - 2\gamma} \\
    x(t) &= x_T e^{A(T)-A(t)} + e^{-A(t)} \sqrt{\int_t^T e^{2A(t)} dt} \xi \\
    &= x_T \frac{(1 + t)^{\gamma}(2+t)^{1-\gamma}}{(1 + T)^{\gamma}(2+T)^{1-\gamma}} 
    + (1 + t)^{\gamma}(2+t)^{1-\gamma} \sqrt{\frac{1}{2\gamma-1}} \sqrt{ \left( \frac{t+1}{t+2} \right)^{1 - 2\gamma} - \left( \frac{T+1}{T+2} \right)^{1 - 2\gamma} } \xi. \\ 
\end{align*}

\subsection{Exact Denoiser for GMM}
\label{app:exact_denoiser}
For the experiments in Figure~\ref{fig:ddpm_vs_ddim_counterexamples}, we used an exact denoiser, for which we require exact conditional and unconditional scores. Exact scores are available for any GMM as follows. This is well-known (e.g. \citet{karras2024guiding}) but repeated here for convenience.

\begin{align*}
    p(x) = \sum w_i \phi(x; \mu_i, \sigma_i), & \quad
    \text{where} \quad \phi(x; \mu, \sigma^2) := \frac{1}{\sqrt{2 \pi} \sigma} e^{-\frac{(x - \mu)^2}{2\sigma^2}} \\
    \implies \grad \log p(x) &= \frac{\grad p(x)}{p(x)} \\
    &= \frac{\sum w_i \grad \phi(\mu_i, \sigma_i)}{\sum w_i \phi(\mu_i, \sigma_i)} \\
    &= - \frac{\sum w_i \left(\frac{x - \mu_i}{\sigma_i^2} \right) \phi(x; \mu_i, \sigma_i^2)}{\sum w_i \phi(\mu_i, \sigma_i)}.
\end{align*}

\section{PCG SDE}
\label{app:pcg_sde}

We want to show that the SDE limit of Algorithm \ref{alg:pcg} with $K=1$ is
\begin{align*}
    dx &= \Delta \ddim(x, t) + \Delta \LDG(x, t, \gamma).
\end{align*}
To see this, note that
a single iteration of Algorithm \ref{alg:pcg} with $K=1$ expands to
\begin{align*}
    x_t &=  x_{t+\dt} \underbrace{-\half \beta_{t} (x_{t+\dt} - \nabla \log p_{t+\dt}(x_{t+\dt}|c))\dt}_{\text{DDIM step on } p_{t+\dt}(x+\dt|c)}
    + \underbrace{\frac{\beta_t\Delta{t}}{2} \grad \log p_{t,\gamma}(x_t|c) + \sqrt{\beta_t\Delta{t}} \cN(0, I_d)}_{\text{Langevin dynamics on } p_{t, \gamma}(x|c)} \\
    \implies dx &= \lim_{\dt \to 0} x_t - x_{t+\dt} 
    = \underbrace{-\half \beta_{t} (x_{t} - \nabla \log p_{t}(x_{t}|c))dt}_{\Delta \ddim(x, t)}
    + \underbrace{\half \beta_t \grad \log p_{t,\gamma}(x_t|c) dt + \sqrt{\beta_t} d\bar w}_{\Delta \LDG(x, t, \gamma)}.
\end{align*}
This concludes the proof.

A subtle point in the argument above is that $\Delta \LDG(x, t, \gamma)$ represents the result of the Langevin step in the PCG corrector update, rather than the differential of an SDE. In Algorithm \ref{alg:pcg}, $t$ remains constant during the LD iteration, and so the SDE corresponding to the LD iteration is
\begin{align}
dx = \half \beta_t \grad \log p_{t,\gamma}(x_t|c) ds + \sqrt{\beta_t} d\bar w,
\label{eq:ldg_ds}
\end{align}
where $s$ is an LD time-axis that is distinct from the denoising time $t$, which is fixed during the LD iteration. Thus $\Delta \LDG(x, t, \gamma)$ is not the differential of \eqref{eq:ldg_ds} (the difference is $dt$ vs $ds)$. However, when we take an LD step of length $dt$ as required for the PCG corrector, the result is
\begin{align*}
   \int_0^{dt} -\frac{\beta_t}{2}  \grad \log p_{t, \gamma}ds + \sqrt{\beta_t} d\bar{w} = -\frac{\beta_t}{2}  \grad \log p_{t, \gamma}dt + \sqrt{\beta_t} d \bar{w} = \Delta \LDG(x, t, \gamma),
\end{align*}
so $\Delta \LDG(x, t, \gamma)$ represents the result of the PCG corrector update in the limit as $\dt \to 0$.

\section{Additional Samples}

\begin{figure}
    \centering
    \includegraphics[width=1.0\textwidth]{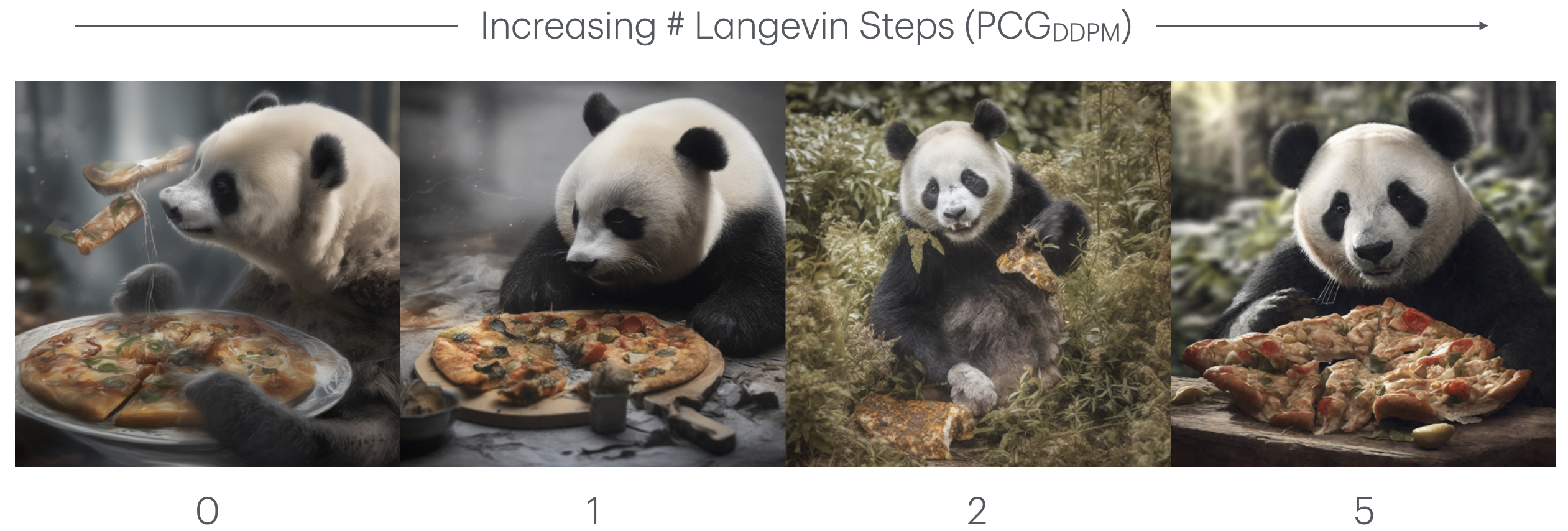}
    \caption{{\bf Effect of Langevin Dynamics}. PCG generations with $\gamma=1$ (no guidance) fixed and number of Langevin steps $K$ varied. The prompt is ``photograph of a panda eating pizza''. Increasing the number of Langevin steps can qualitatively improve image quality, even without guidance.}
    \label{fig:panda}
\end{figure}

\section{Algorithms}
\algorithmstyle{ruled}
\begin{algorithm}
\newcommand\mycommfont[1]{\normalfont\textcolor{black}{#1}}
\SetCommentSty{mycommfont}
\DontPrintSemicolon
\SetNoFillComment
\caption{$\pcg_\ddim$, explicit}\label{alg:pcg-explicit}
\SetKwInput{KwData}{Constants}
\SetKwComment{Comment}{$\triangleright$\ }{}
\KwIn{Conditioning $c$, guidance weight $\gamma \geq 0$}
\KwData{$\{\alpha_t\}, \{\bar{\alpha}_t\}, \{\beta_t\}$ from \citet{ho2020denoising}}
$x_1 \sim \cN(0, I)$ \;
\For{$(t=1-\dt;~ t \geq 0;~ t \gets t- \dt)$}{
$\eps, \eps_c := \textsf{NoisePredictionModel}(x_{t+\dt}, c)$\;
$\hat{x}_0 := (x_{t+\dt} - \sqrt{1-\bar{\alpha}_{t+\dt}}\eps_c) / \sqrt{\bar{\alpha}_{t+\dt}}$ \;
$x_t := \sqrt{\bar{\alpha}_{t}} \hat{x}_0 + \sqrt{1-\bar{\alpha}_{t}} \eps_c$ \Comment*{DDIM step on $p_{t}(x | c)$}
    \For{$k = 1,\dots K$}{
        $x_t \gets x_t  - \frac{\beta_t}{2\sqrt{1-\bar{\alpha}_t}}\left((1-\gamma)\eps  + \gamma\eps_c \right) + \sqrt{\beta_t} \eta$ \Comment*{Langevin dynamics on $p_{t, \gamma}(x|c)$}
    }
}
\Return $x_0$
\end{algorithm}

\end{document}